\documentclass{article}

    \PassOptionsToPackage{numbers, compress}{natbib}

\usepackage[final]{neurips_2020}




\usepackage[utf8]{inputenc} 
\usepackage[T1]{fontenc}    
\usepackage{hyperref}       
\usepackage{url}            
\usepackage{booktabs}       
\usepackage{amsfonts}       
\usepackage{nicefrac}       
\usepackage{microtype}      

\usepackage{amsthm,amsmath,amsfonts,bm,amssymb,bbm}
\usepackage{physics,graphicx}
\usepackage{xcolor}
\usepackage[shortlabels]{enumitem}
\usepackage[linesnumbered,ruled,vlined]{algorithm2e}

\newtheorem{prop}{Proposition}
\newtheorem{assumption}{Assumption}
\newtheorem{lemma}{Lemma}
\newtheorem{definition}{Definition}
\usepackage{subcaption}
\DeclareMathOperator*{\argmax}{argmax}

\DeclareMathOperator*{\E}{\mathbb{E}}
\newcommand{\mcL}{\mathcal{L}}
\newcommand{\sbra}{\left(}
\newcommand{\sket}{\right)}
\newcommand{\reals}{\mathbb{R}}
\newcommand{\ind}{\mathbbm{1}}
\newcommand{\wDelta}{\widetilde{\Delta}}

\newcommand{\vpara}[1]{\vspace{0.05in}\noindent\textbf{#1 }}

\title{Towards More Practical Adversarial Attacks on Graph Neural Networks}

%

\author{%
    Jiaqi Ma \thanks{School of Information, University of Michigan, Ann Arbor, Michigan, USA} \thanks{Equal contribution.}\\
    \texttt{jiaqima@umich.edu} \\
    \And
    Shuangrui Ding \thanks{Department of EECS, University of Michigan, Ann Arbor, Michigan, USA} \footnotemark[2]\\
    \texttt{markding@umich.edu} \\
    \And
    Qiaozhu Mei\footnotemark[1] \footnotemark[3] \\
    \texttt{qmei@umich.edu} \\
}

\begin{document}

\maketitle 
\begin{abstract}
We study the black-box attacks on graph neural networks (GNNs) under a novel and realistic constraint: attackers have access to only a subset of nodes in the network, and they can only attack a small number of them. A node selection step is essential under this setup. We demonstrate that the structural inductive biases of GNN models can be an effective source for this type of attacks. Specifically, by exploiting the connection between the backward propagation of GNNs and random walks, we show that the common gradient-based white-box attacks can be generalized to the black-box setting via the connection between the gradient and an importance score similar to PageRank. In practice, we find attacks based on this importance score indeed increase the classification loss by a large margin, but they fail to significantly increase the mis-classification rate. Our theoretical and empirical analyses suggest that there is a discrepancy between the loss and mis-classification rate, as the latter presents a diminishing-return pattern when the number of attacked nodes increases. Therefore, we propose a greedy procedure to correct the importance score that takes into account of the diminishing-return pattern. Experimental results show that the proposed procedure can significantly increase the mis-classification rate of common GNNs on real-world data without access to model parameters nor predictions\footnote{The code for the experiments in this paper is available at \url{https://github.com/Mark12Ding/GNN-Practical-Attack}. A follow-up project with improved experiment setup is available at \url{https://github.com/TheaperDeng/GNN-Attack-InfMax}.}.
\end{abstract}
\section{Introduction}
Graph neural networks (GNNs)~\citep{wu2020comprehensive}, the family of deep learning models on graphs, have shown promising empirical performance on various applications of machine learning to graph data, such as recommender systems~\citep{ying2018graph}, social network analysis~\citep{li2017deepcas}, and drug discovery~\citep{shi2020graphaf}. Like other deep learning models, GNNs have also been shown to be vulnerable under adversarial attacks~\citep{zugner2018adversarial}, which has recently attracted increasing research interest~\citep{jin2020adversarial}. Indeed, adversarial attacks have been an efficient tool to analyze both the theoretical properties as well as the practical accountability of graph neural networks. As graph data have more complex structures than image or text data, researchers have come up with diverse adversarial attack setups. For example, there are different tasks (node classification and graph classification), assumptions of attacker's knowledge (white-box, grey-box, and black-box), strategies (node feature modification and graph structure modification), and corresponding budget or other constraints (norm of feature changes or number of edge changes). 

Despite these research efforts, there is still a considerable gap between the existing attack setups and the reality. It is unreasonable to assume that an attacker can alter the input of a large proportion of nodes, and even if there is a budget limit, it is unreasonable to assume that they can attack any node as they wish.  For example, in a real-world social network, the attackers usually only have access to a few bot accounts, and they are unlikely to be among the top nodes in the network; it is difficult for the attackers to hack and alter the properties of celebrity accounts.  Moreover, an attacker usually has limited knowledge about the underling machine learning model used by the platform (e.g., they may roughly know what types of models are used but have no access to the model parameters or training labels). Motivated by the real-world scenario of attacks, in this paper we study a new type of black-box adversarial attack for node classification tasks, which is more restricted and more practical, assuming that the attacker has no access to the model parameters or predictions. Our setup differs from existing work with a novel constraint on node access, where attackers only have access to a subset of nodes in the graph, and they can only manipulate a small number of them. 

The proposed black-box adversarial attack requires a two-step procedure: 1) selecting a small subset of nodes to attack under the limits of node access; 2) altering the node attributes or edges under a per-node budget. In this paper, we focus on the first step and study the node selection strategy. The key insight of the proposed strategy lies in the observation that, with no access to the GNN parameters or predictions, the strong \textit{structural inductive biases} of the GNN models can be exploited as an effective information source of attacks. The structural inductive biases encoded by various neural architectures (e.g., the convolution kernel in convolutional neural networks) play important roles in the success of deep learning models. GNNs have even more explicit structural inductive biases due to the graph structure and their heavy weight sharing design. Theoretical analyses have shown that the understanding of structural inductive biases could lead to better designs of GNN models~\citep{xu2018representation,klicpera2018predict}. From a new perspective, our work demonstrates that such structural inductive biases can turn into security concerns in a black-box attack, as the graph structure is usually exposed to the attackers.

Following this insight, we derive a  node selection strategy with a formal analysis of the proposed black-box attack setup. By exploiting the connection between the backward propagation of GNNs and random walks, we first generalize the gradient-norm in a white-box attack into a model-independent importance score similar to the PageRank. In practice, attacking the nodes with high importance scores increases the classification loss significantly but does not generate the same effect on the mis-classification rate. Our theoretical and empirical analyses suggest that such discrepancy is due to the diminishing-return effect of the mis-classification rate. We further propose a greedy correction procedure for calculating the importance scores. Experiments on three real-world benchmark datasets and popular GNN models show that the proposed attack strategy significantly outperforms baseline methods. We summarize our main contributions as follows:
\begin{enumerate}
    \item We propose a novel setup of black-box attacks for GNNs with a constraint of limited node access, which is by far the most restricted and practical compared to existing work.
    \item We demonstrate that the structural inductive biases of GNNs can be exploited as an effective information source of black-box adversarial attacks.
    \item We analyze the discrepancy between classification loss and mis-classification rate and propose a practical greedy method of adversarial attacks for node classification tasks.
    \item We empirically verify the effectiveness of the proposed method on three benchmark datasets with popular GNN models.
\end{enumerate}

\section{Related Work}
\subsection{Adversarial Attack on GNNs}
\label{sec:realed-attack}
The study of adversarial attacks on graph neural networks has surged recently. A taxonomy of existing work has been summarized by \citet{jin2020adversarial}, and we give a brief introduction here. First, there are two types of machine learning tasks on graphs that are commonly studied, node-level classification and graph-level classification. We focus on the node-level classification in this paper. Next, there are a couple of choices of the attack form. For example, the attack can happen either during model training (poisoning) or during model testing (evasion); the attacker may aim to mislead the prediction on specific nodes (targeted attack)~\citep{zugner2018adversarial} or damage the overall task performance (untargeted attack)~\citep{zugner_adversarial_2019}; the adversarial perturbation can be done by modifying node features, adding or deleting edges, or injecting new nodes~\citep{sun2019node}. Our work belongs to untargeted evasion attacks. For the adversarial perturbation, most existing works of untargeted attacks apply global constraints on the proportion of node features or the number of edges to be altered. Our work sets a novel local constraint on node access, which is more realistic in practice: perturbation on top (e.g., celebrity) nodes is prohibited and only a small number of nodes can be perturbed.
Finally, depending on the attacker's knowledge about the GNN model, existing work can be split into three categories: white-box attacks~\citep{xu2019topology,chen2018fast,Wu2019AdversarialEF} have access to full information about the model, including model parameters, input data, and labels; grey-box attacks~\citep{zugner_adversarial_2019,zugner2018adversarial,sun2019node} have partial information about the model and the exact setups vary in a range; in the most challenging setting, black-box attacks~\citep{dai2018adversarial,bojchevski2018adversarial,chang2020restricted} can only access the input data and sometimes the black-box predictions of the model. In this work, we consider an even more strict black-box attack setup, where model predictions are invisible to the attackers. As far as we know, the only existing works that conduct untargeted black-box attacks without access to model predictions are those by \citet{bojchevski2018adversarial} and \citet{chang2020restricted}. However both of them require the access to embeddings of nodes, which are prohibited as well in our setup. 

\subsection{Structural Inductive Bias of GNNs}
While having an extremely restricted black-box setup, we demonstrate that effective adversarial attacks are still possible due to the strong and explicit structural inductive biases of GNNs. 

Structural inductive biases refer to the structures encoded by various neural architectures, such as the weight sharing mechanisms in convolution kernels of convolutional neural networks, or the gating mechanisms in recurrent neural networks. Such neural architectures have been recognized as a key factor for the success of deep learning models~\citep{zoph2016neural}, which (partially) motivate some recent developments of neural architecture search~\citep{zoph2016neural}, Bayesian deep learning~\citep{wilson2020case}, Lottery Ticket Hypothesis~\citep{frankle2018lottery}, etc.
The natural graph structure and the heavy weight sharing mechanism grant GNN models even more explicit structural inductive biases. Indeed, GNN models have been theoretically shown to share similar behaviours as Weisfeiler-Lehman tests~\citep{morris2019weisfeiler,xu2018powerful} or random walks~\citep{xu2018representation}. On the positive side, such theoretical analyses have led to better GNN model designs~\citep{xu2018representation,klicpera2018predict}. 

Our work instead studies the negative impact of the structural inductive biases in the context of adversarial attacks: when the graph structure is exposed to the attacker, such structural information can turn into the knowledge source for an attack. While most existing attack strategies more-or-less utilize some structural properties of GNNs, they are utilized in a \textit{data-driven manner} which requires querying the GNN model, e.g., learning to edit the graph via a trial-and-error interaction with the GNN model~\citep{dai2018adversarial}. We formally establish connections between the structural properties and attack strategies without any queries to the GNN model.

\section{Principled Black-Box Attack Strategies with Limited Node Access}
In this section, we derive principled strategies to attack GNNs under the novel black-box setup with limited node access.
We first analyze the corresponding \textit{white-box} attack problem in Section~\ref{sec:white-box} and then adapt the theoretical insights from the white-box setup to the black-box setup and propose a black-box attack strategy in Section~\ref{sec:black-box}. Finally, in Section~\ref{sec:diminish}, we correct the proposed strategy by taking into account of the diminishing-return effect for the mis-classification rate.

\subsection{Preliminary Notations}
\label{sec:prelim}
We first introduce necessary notations. We denote a graph as $G=(V,E)$, where $V=\{1, 2, \ldots, N\}$ is the set of $N$ nodes, and $E \subseteq V\times V$ is the set of edges. For a node classification problem, the nodes of the graph are collectively associated with node features $X\in \reals^{N\times D}$ and labels $y\in \{1, 2, \ldots, K\}^N$, where $D$ is the dimensionality of the feature vectors and $K$ is the number of classes. Each node $i$'s local neighborhood including itself is denoted as $\mathcal{N}_i=\{j\in V \mid (i,j)\in E \} \cup \{i\}$, and its degree as $d_i=|\mathcal{N}_i|$. To ease the notation, for any matrix $A\in \reals^{D_1\times D_2}$ in this paper, we refer $A_j$ to the transpose of the $j$-th row of the matrix, i.e., $A_j \in \reals^{D_2}$.

\vpara{GNN models.} Given the graph $G$, a GNN model is a function $f_G: \reals^{N\times D} \rightarrow \reals^{N\times K}$ that maps the node features $X$ to output logits of each node. We denote the output logits of all nodes as a matrix $H\in \reals^{N\times K}$ and $H=f_G(X)$. A GNN $f_G$ is usually built by stacking a certain number ($L$) of layers, with the $l$-th layer, $1\le l\le L$, taking the following form:
\begin{equation}
    \label{eq:gcn}
    H_i^{(l)} = \sigma\sbra \sum_{j\in \mathcal{N}_i} \alpha_{ij} \boldsymbol{W}_l H_j^{(l-1)} \sket,
\end{equation}
where $H^{(l)}\in \reals^{N\times D_l}$ is the hidden representation of nodes with $D_l$ dimensions, output by the $l$-th layer; $\boldsymbol{W}_l$ is a learnable linear transformation matrix; $\sigma$ is an element-wise nonlinear activation function; and different GNNs have different normalization terms $\alpha_{ij}$. For instance, $\alpha_{ij}=1 / \sqrt{d_{i} d_{j}}$ or $\alpha_{ij}=1 / d_{i}$ in Graph Convolutional Networks (GCN)~\citep{kipf2016semi}. In addition, $H^{(0)}=X$ and $H=H^{(L)}$.

\vpara{Random walks.} 
A random walk~\citep{lovasz1993random} on $G$ is specified by the matrix of transition probabilities, $M\in \reals^{N\times N}$, where
\[
M_{ij}=\left\{\begin{array}{ll}
1 / d_i, & \text { if } (i, j) \in E \text{ or } j=i, \\
0, & \text { otherwise.}
\end{array}\right.
\]
Each $M_{ij}$ represents the probability of transiting from $i$ to $j$ at any given step of the random walk. And powering the transition matrix by $t$ gives us the $t$-step transition matrix $M^t$.

\subsection{White-Box Adversarial Attacks with Limited Node Access}
\label{sec:white-box}

\vpara{Problem formulation.} Given a classification loss $\mcL:\reals^{N\times K} \times \{1,\ldots, K\}^N \rightarrow \reals$, the problem of white-box attack with limited node access can be formulated as an optimization problem as follows:
\begin{align}
    \max_{S\subseteq V}\quad&\mcL(H,y) \label{eq:white-opt} \\
    \textrm{subject to}\quad& |S| \le r, d_i\le m, \forall i\in S \nonumber\\
    & H = f(\tau(X, S)), \nonumber
\end{align}
where $r, m \in \mathbb{Z}^+$ respectively specify the maximum number of nodes and the maximum degree of nodes that can be attacked. Intuitively, we treat high-degree nodes as a proxy of celebrity accounts in a social network. For simplicity, we have omitted the subscript $G$ of the learned GNN classifier $f_G$. The function $\tau: \reals^{N\times D}\times 2^V\rightarrow \reals^{N\times D}$ perturbs the feature matrix $X$ based on the selected node set $S$ (i.e., \textit{attack set}). Under the white-box setup, theoretically $\tau$ can also be optimized to maximize the loss. However, as our goal is to study the node selection strategy under the black-box setup, we set $\tau$ as a pre-determined function. In particular, we define the $j$-th row of the output of $\tau$ as $\tau(X, S)_j = X_j + \ind[j\in S]\epsilon$,
where $\epsilon \in \reals^D$ is a small constant noise vector constructed by attackers' domain knowledge about the features. In other words, the same small noise vector is added to the features of every attacked node. 

We use the Carlili-Wagner loss for our analysis, a close approximation of cross-entropy loss and has been used in the analysis of adversarial attacks on image classifiers~\citep{carlini2017towards}: 
\begin{align}
     \mcL(H,y) \triangleq \sum_{j=1}^{N}\mcL_j(H_j,y_j) \triangleq \sum_{j=1}^{N} \max_{k \in \{1,\ldots,K\}} H_{jk} - H_{jy_{j}}.
\end{align}

\vpara{The change of loss under perturbation.} Next we investigate how the overall loss changes when we select and perturb different nodes. We define the change of loss when perturbing the node $i$ as a function of the perturbed feature vector $x$: 
\[\Delta_i(x) = \mcL(f(X'), y) - \mcL(f(X), y), \text{ where } X'_i=x \text{ and } X'_j=X_j, \forall j\neq i.\] To concretize the analysis, we consider the GCN model with $\alpha_{ij} = \frac{1}{d_i}$ in our following derivations. Suppose $f$ is an $L$-layer GCN. With the connection between GCN and random walk~\citep{xu2018representation} and Assumption~\ref{assum:label} on the label distribution, we can show that, in expectation, the first-order Taylor approximation $\widetilde{\Delta}_i(x) \triangleq \Delta_i(X_i) + (\nabla_{x}\Delta_i(X_i))^T(x - X_i)$ is related to the sum of the $i$-th column of the $L$-step random walk transition matrix $M^L$. We formally summarize this finding in Proposition~\ref{prop-rw}. 
\begin{assumption} [Label Distribution]
\label{assum:label}
Assume the distribution of the labels of all nodes follows the same constant categorical distribution, i.e., \[\Pr[y_j = k] = q_k, \forall j=1, 2, \ldots, N,\] where $0 < q_k < 1$ for $k=1, 2, \ldots, K$ and $\sum_{k=1}^K q_k = 1$. Moreover, since the classifier $f$ has been well-trained and fixed, the prediction of $f$ should capture certain relationships among the $K$ classes. Specifically, we assume the chance for $f$ predicting any node $j$ as any class $k\in \{1,\ldots,K\}$, conditioned on the node label $y_j=l\in \{1,\ldots,K\}$, confines to a certain distribution $p(k\mid l)$, i.e.,
\begin{align*}
  \Pr[\sbra\argmax_{c\in \{1,\ldots,K\}} H_{jc}\sket=k\mid y_j=l]= p(k\mid l).
\end{align*}
\end{assumption}

\begin{prop}
\label{prop-rw}
For an $L$-layer GCN model, if Assumption~\ref{assum:label} and a technical assumption about the GCN\footnote{This is an assumption made by \citet{xu2018representation}, which we list as Assumption~\ref{assum:relu} in Appendix~\ref{sup:proof-rw}.}
hold, then \[\delta_i \triangleq \E\left[\wDelta_i\sbra x\sket|_{x=\tau(X, \{i\})_i}\right] = C\sum_{j=1}^N [M^L]_{ji},\]
where $C$ is a constant independent of $i$.
\end{prop}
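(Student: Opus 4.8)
The plan is to compute the expected linearized loss change $\delta_i$ by differentiating the Carlini-Wagner loss through the GCN, exploiting the known connection between GCN backward propagation and random walks. The key identity I want to reach is that the gradient $\nabla_x \Delta_i(X_i)$, evaluated at the unperturbed point, decomposes as a product of (i) a term depending on the output-layer sensitivity $\partial \mcL_j / \partial H_j$ for each node $j$, and (ii) the Jacobian $\partial H_j / \partial X_i$ of node $j$'s logits with respect to node $i$'s input features. Since the perturbation $\tau(X,\{i\})_i - X_i = \epsilon$ is a fixed vector, $\wDelta_i(x)|_{x=\tau(X,\{i\})_i} = (\nabla_x \Delta_i(X_i))^T \epsilon$ (the zeroth-order term $\Delta_i(X_i)$ vanishes since perturbing nothing changes nothing), so everything reduces to understanding this Jacobian.

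\textbf{First} I would invoke the technical GCN assumption (Assumption~\ref{assum:relu} of \citet{xu2018representation}) to write the expected Jacobian of the $L$-layer GCN as proportional to $[M^L]_{ij}$ times a product of weight matrices $\boldsymbol{W}_L \cdots \boldsymbol{W}_1$; this is the crux of the representation result, which states that under randomized ReLU activations the expected influence of input $i$ on output $j$ is governed by the $L$-step random walk probability from $j$ to $i$. Concretely, $\E[\partial H_{jk}/\partial X_i] = [M^L]_{ij} \cdot (\text{weight product})_k$, so that the chain rule gives $\E[\nabla_x \Delta_i(X_i)^T \epsilon]$ as a sum over $j$ of $[M^L]_{ij}$ times a per-node scalar coming from the loss sensitivity and the fixed quantities $\boldsymbol{W}, \epsilon$. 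Note $[M^L]_{ij}$ here sums against the \emph{column} index $i$, which is where the claimed $\sum_j [M^L]_{ji}$ will come from once I'm careful about the transpose in backward propagation versus the forward transition matrix.

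\textbf{The role of Assumption~\ref{assum:label}} is then to collapse the per-node loss-sensitivity scalars into a constant independent of both $j$ and $i$. The derivative of the Carlini-Wagner loss $\mcL_j$ with respect to $H_j$ is (up to sign) the difference between the indicator of the argmax-predicted class and the indicator of the true label $y_j$. Taking the expectation over the label distribution $q_l$ and the conditional prediction distribution $p(k\mid l)$, this expected gradient vector becomes the \emph{same} for every node $j$, since the assumption makes the joint law of (true label, predicted label) identical across nodes. Contracting this common expected loss-gradient vector with the fixed weight product and the fixed noise $\epsilon$ produces a single scalar $C$ that does not depend on $i$. Pulling $C$ outside the sum leaves exactly $\delta_i = C \sum_{j=1}^N [M^L]_{ji}$, matching the claim.

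\textbf{The main obstacle} I anticipate is handling the expectation correctly: the argmax inside the loss derivative is a discontinuous function of $H$, so interchanging expectation and differentiation needs care, and I must verify that $\E[\nabla \mcL_j]$ genuinely factors out of the sum over $j$ without residual $j$-dependence leaking in through correlations between the prediction at node $j$ and the random walk weights. I expect the paper sidesteps this by assuming independence (baked into Assumption~\ref{assum:label}'s ``constant categorical distribution'' and the fixed classifier) so that the expected loss-sensitivity is a universal constant vector; the delicate bookkeeping is confirming the index convention on $M$ (that backward propagation through the symmetric-in-structure but row-normalized operator yields $[M^L]_{ji}$ rather than $[M^L]_{ij}$) and that the constant $C$ absorbs all of $\boldsymbol{W}_1,\ldots,\boldsymbol{W}_L$, $\epsilon$, and the label statistics cleanly.
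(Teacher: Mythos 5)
Your proposal matches the paper's proof essentially step for step: the same chain-rule decomposition $\wDelta_i = \sum_j (\partial \mcL_j/\partial H_j)^T (\partial H_j/\partial X_i)\,\epsilon$, the same invocation of the random-walk Jacobian result of \citet{xu2018representation} (with the correct $[M^L]_{ji}$ orientation in the final claim), and the same use of Assumption~\ref{assum:label} to make the expected Carlini--Wagner loss gradient a constant vector $c$ independent of $j$, yielding $C = \rho\, c^T \prod_l \boldsymbol{W}_l\, \epsilon$. The factorization subtlety you flag (splitting $\E[a^{jT}(\partial H_j/\partial X_i)\epsilon]$ into a product of expectations) is indeed present in the paper, which passes over it with an ``$\approx$'' rather than an explicit independence argument.
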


\subsection{Adaptation from the White-Box Setup to the Black-Box Setup}
\label{sec:black-box}
Now we turn to the \textit{black-box} setup where we have no access to the model parameters or predictions. This means we are no longer able to evaluate the objective function $\mcL(H, y)$ of the optimization problem~(\ref{eq:white-opt}). Proposition~\ref{prop-rw} shows that the relative ratio of $\delta_i / \delta_j$ between different nodes $i\neq j$ only depends on the random walk transition matrix, which we can easily calculate based on the graph $G$. This implies that we can still approximately optimize the problem~(\ref{eq:white-opt}) in the black-box setup. 

\vpara{Node selection with importance scores.}
Consider the change of loss under the perturbation of a set of nodes $S$. If we write the change of loss as a function of the perturbed features and take the first order Taylor expansion, which we denote as $\delta$, we have $\delta = \sum_{i\in S} \delta_i$. Therefore $\delta$ is maximized by the set of $r$ nodes with degrees less than $m$ and the largest possible $\delta_i$, where $m, r$ are the limits of node access defined in the problem (\ref{eq:white-opt}). Therefore, we can define an \textit{importance score} for each node $i$ as the sum of the $i$-th column of $M^L$, i.e., $I_{i} = \sum_{j=1}^{N} [M^L]_{ji}$,
and simply select the nodes with the highest importance scores to attack. We denote this strategy as \textbf{RWCS} (Random Walk Column Sum). We note that RWCS is similar to PageRank. The difference between RWCS and PageRank is that the latter uses the stationary transition matrix $M^{\infty}$ for a random walk with restart. 

Empirically, RWCS indeed significantly increases the classification loss (as shown in Section~\ref{sec:results}). The nonlinear loss actually increases linearly w.r.t. the perturbation strength (the norm of the perturbation noise $\epsilon$) for a wide range, which indicates that $\wDelta_i$ is a good approximation of $\Delta_i$. Surprisingly, RWCS fails to continue to increase the mis-classification rate (which matters more in real applications) when the perturbation strength becomes larger. Details of this empirical finding are shown in Figure~\ref{fig:diminish} in Section~\ref{sec:results}. We conduct additional formal analyses on the mis-classification rate in the following section and find a diminishing-return effect of adding more nodes to the attack set when the perturbation strength is adequate. 

\subsection{Diminishing-Return of Mis-classification Rate and its Correction}
\label{sec:diminish}
\vpara{Analysis of the diminishing-return effect.} Our analysis is based on the investigation that each target node $i\in V$ will be mis-classified as we increase the attack set. 

To assist the analysis, we first define the concepts of \textit{vulnerable function} and \textit{vulnerable set} below.
\begin{definition} [Vulnerable Function]
\label{def:vf}
We define the vulnerable function $g_i: 2^V\rightarrow \{0, 1\}$ of a target node $i\in V$ as, for a given attack set $S\subseteq V$,
\begin{align*}
  g_i(S)=   
  \left\{
        \begin{array}{lr}
         1, &  \text{ if $i$ is mis-classified when attacking $S$},\\
         0,& \text{if $i$ is correctly-classified when attacking $S$.}
         \end{array}
  \right.
\end{align*}
\end{definition}

\begin{definition} [Vulnerable Set]
\label{def:vs}
We define the vulnerable set of a target node $i\in V$ as a set of all attack sets that could lead $i$ to being mis-classified:
\[A_i\triangleq \{S\subseteq V \mid g_i(S) = 1\}.\]
\end{definition}

We also make the following assumption about the vulnerable function.
\begin{assumption}
\label{assum:monotonic}
$g_i$ is non-decreasing for all $i\in V$, i.e., if $T \subseteq S \subseteq V$, then $g_i(T)\le g_i(S)$.
\end{assumption}

With the definitions above, the mis-classification rate can be written as the average of the vulnerable functions: $h(S) = \frac{1}{N}\sum_{i=1}^N g_i(S)$. 
By Assumption~\ref{assum:monotonic}, $h$ is also clearly non-decreasing.

We further define the \textit{basic vulnerable set} to characterize the minimal attack sets that can lead a target node to being mis-classified. 
\begin{definition} [Basic Vulnerable Set]
\label{def:bvs}
$\forall i\in V$, we call $B_i\subseteq A_i$ a basic vulnerable set of $i$ if,
\begin{enumerate} [1)]
    \item $\emptyset \notin B_i$; if $\emptyset\in A_i, B_i=\emptyset$;
    \item if $\emptyset\notin A_i$, for any nonempty $S\in A_i$, there exists a $T\in B_i$ s.t. $T\subseteq S$;
    \item for any distinct $S, T \in B_i$, $|S\cap T| < \min(|S|, |T|)$.
\end{enumerate}
\end{definition}

And the existence of such a basic vulnerable set is guaranteed by Lemma~\ref{lemma:existence}.
\begin{lemma}
\label{lemma:existence}
For any $i\in V$, there exists a unique $B_i$.
\end{lemma}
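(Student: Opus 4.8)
The plan is to identify $B_i$ explicitly as the family of inclusion-minimal members of $A_i$, and then to show both that this family satisfies the three defining properties (existence) and that it is the only family that can (uniqueness). The whole argument rests on the finiteness of $V$, which makes $(2^V, \subseteq)$ a finite poset, so that every nonempty subfamily has minimal elements and every member dominates one.

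First I would unpack condition 3). For distinct sets $S, T$, the inequality $|S\cap T| < \min(|S|,|T|)$ holds exactly when neither set contains the other: if, say, $S\subseteq T$ then $S\cap T = S$ and $|S\cap T| = |S| = \min(|S|,|T|)$, contradicting the strict inequality; conversely, if neither contains the other then $S\cap T$ is a proper subset of each, giving both strict inequalities. Hence condition 3) simply says that $B_i$ is an antichain in $(2^V, \subseteq)$.

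For existence I would split on whether $\emptyset \in A_i$. If $\emptyset \in A_i$, set $B_i = \emptyset$; then 1) holds and 2), 3) are vacuously true. Otherwise, set $B_i = \{S \in A_i : S \text{ is minimal in } A_i\}$. Property 1) holds since every minimal element is nonempty; property 3) holds because two distinct minimal sets cannot contain one another, so by the first step they satisfy the cardinality inequality. For property 2), given a nonempty $S \in A_i$, I would look at the finite nonempty family of subsets of $S$ lying in $A_i$ (it contains $S$ itself), pick a minimal element $T$ of that family, and check that $T$ is in fact minimal in all of $A_i$: any proper subset of $T$ belonging to $A_i$ would also be a subset of $S$ in $A_i$, contradicting minimality within $S$. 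Thus $T\in B_i$ and $T\subseteq S$.

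For uniqueness I would show that any family meeting 1)--3) must equal the set of minimal elements of $A_i$. When $\emptyset \in A_i$, condition 1) forces $B_i = \emptyset$. When $\emptyset \notin A_i$, I would argue both inclusions: (a) every $S\in B_i$ is minimal in $A_i$, for otherwise some $T\in A_i$ with $T\subsetneq S$ would, by 2), contain a member $T'\in B_i$, and then $T'\subsetneq S$ would be two distinct comparable elements of $B_i$, violating the antichain property 3); and (b) every minimal $S^*$ lies in $B_i$, since 2) produces a $T\in B_i\subseteq A_i$ with $T\subseteq S^*$, and minimality of $S^*$ forces $T = S^*$. I expect step (a) to be the most delicate point, as it is the only place where conditions 2) and 3) must be played against each other simultaneously; everything else follows directly from finiteness.
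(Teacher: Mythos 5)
Your proof is correct and takes essentially the same approach as the paper: both construct $B_i$ as the family of inclusion-minimal elements of $A_i$ (the paper's ``skip any element that is a superset of another'' construction yields exactly this family), and both prove uniqueness by playing condition 2) against the antichain condition 3) to produce two comparable elements of a purported $B_i$. The only cosmetic difference is that you first characterize every valid $B_i$ as the minimal-element family, whereas the paper argues by direct contradiction between two hypothetical distinct families---the underlying mechanism is identical.
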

The distribution of the sizes of the element sets of $B_i$ is closely related to the perturbation strength on the features. When the perturbation is small, we may have to perturb multiple nodes before the target node is mis-classified, and thus the element sets of $B_i$ will be large. When perturbation is relatively large, we may be able to turn a target node to be mis-classified by perturbing a single node, if chosen wisely. In this case $B_i$ will have a lot of singleton sets.

Our following analysis (Proposition~\ref{prop-approx-submodular}) shows that $h$ has a diminishing-return effect if the vulnerable sets of nodes on the graph present \textit{homophily} (Assumption~\ref{assum:homophily}), which is common in real-world networks, and the perturbation on features becomes considerably large (Assumption~\ref{assum:large_perturbation}). 

\begin{assumption} [Homophily]
\label{assum:homophily}
$\forall S\in \cup_{i=1}^N A_i$ and $|S| > 1$, there are $b(S) \ge 1$ nodes s.t., for any node $j$ among these nodes, $S\in A_j$. 
\end{assumption}

Intuitively, the vulnerable sets present strong homophily if $b(S)$'s are large.

\begin{assumption} [Considerable Perturbation]
\label{assum:large_perturbation}
$\forall S\in \cup_{i=1}^N A_i$ and if $|S| > 1$, then there are $\lceil p(S)\cdot b(S) \rceil$ nodes s.t., for any node $j$ among these nodes, there exists a set $T\subseteq S$, $|T|=1$, and $T\in A_j$. And $\frac{r}{r+1} < p(S)\le 1$.
\end{assumption}

\begin{prop}
\label{prop-approx-submodular}
If Assumptions~\ref{assum:homophily} and \ref{assum:large_perturbation} hold, $h$ is $\gamma$-approximately submodular for some $0 < \gamma < \frac{1}{r}$, i.e., there exists a non-decreasing submodular function $\tilde{h}: 2^V \rightarrow \reals_{+}$, s.t. $\forall S\subseteq V$,
\[(1-\gamma) \tilde{h}(S) \le h(S) \le (1+\gamma) \tilde{h}(S).\]
\end{prop}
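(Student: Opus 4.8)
The plan is to build the approximating function $\tilde h$ out of the \emph{singleton} portions of the basic vulnerable sets, and then to bound the ratio $h/\tilde h$ by applying the two structural assumptions directly to the query set $S$ itself. Concretely, for each target node $i$ let $U_i = \{v \in V \mid \{v\} \in A_i\}$ be the set of single nodes whose individual perturbation already mis-classifies $i$, and define $\tilde g_i(S) = \ind[S \cap U_i \neq \emptyset]$ together with $\tilde h(S) = \frac{1}{N}\sum_{i=1}^N \tilde g_i(S)$. Each $\tilde g_i$ is the indicator that $S$ meets a fixed ground set $U_i$, i.e. a monotone coverage function, hence non-decreasing and submodular; since a nonnegative average of monotone submodular functions is again monotone submodular and $\tilde h$ maps into $[0,1]\subseteq\reals_+$, the function $\tilde h$ has exactly the properties demanded in the statement. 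By Lemma~\ref{lemma:existence} the singleton elements of $B_i$ are precisely $\{\{v\}\mid v\in U_i\}$, so $\tilde g_i$ records the ``size-one'' route to mis-classifying $i$.

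I then treat the two inequalities separately. The lower bound is immediate: if $S\cap U_i\neq\emptyset$, pick $v\in S\cap U_i$; then $\{v\}\subseteq S$ with $\{v\}\in A_i$, so Assumption~\ref{assum:monotonic} gives $g_i(S)\ge g_i(\{v\})=1$. Thus $\tilde g_i\le g_i$ pointwise, so $\tilde h(S)\le h(S)$, and since $0<\gamma<1$ this yields $(1-\gamma)\tilde h(S)\le\tilde h(S)\le h(S)$ for every $S$.

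The upper bound is where the assumptions do the work, and the key move is to instantiate Assumptions~\ref{assum:homophily} and~\ref{assum:large_perturbation} at $S$ rather than at its subsets. Fix $S$ with $|S|>1$ and $S\in\cup_i A_i$. The nodes mis-classified by $S$ are exactly $\{j\mid S\in A_j\}$, whose cardinality is $b(S)$ as introduced in Assumption~\ref{assum:homophily}; hence $N\,h(S)=b(S)$. By Assumption~\ref{assum:large_perturbation} there are at least $\lceil p(S)\,b(S)\rceil$ nodes $j$ admitting a singleton $\{v\}\subseteq S$ with $\{v\}\in A_j$; each such $j$ has $v\in S\cap U_j$, so $\tilde g_j(S)=1$, and as these nodes are distinct we get $N\,\tilde h(S)\ge\lceil p(S)\,b(S)\rceil\ge p(S)\,b(S)=p(S)\,N\,h(S)$. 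Rearranging gives $h(S)\le\frac{1}{p(S)}\tilde h(S)=\bigl(1+\gamma(S)\bigr)\tilde h(S)$ with $\gamma(S):=\frac{1-p(S)}{p(S)}$, and the hypothesis $p(S)>\frac{r}{r+1}$ forces $0\le\gamma(S)<\frac1r$. The remaining sets are handled trivially: for $|S|\le 1$ one checks $g_i(S)=\tilde g_i(S)$ for all $i$ (a singleton $\{v\}$ mis-classifies $i$ iff $v\in U_i$; for $S=\emptyset$ both sides vanish once we record that no node is mis-classified before any perturbation, or equivalently absorb any always-mis-classified node by setting $\tilde g_i\equiv 1$), and for $S\notin\cup_i A_i$ we have $h(S)=\tilde h(S)=0$.

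Finally I make $\gamma$ uniform: the graph is finite, so $\gamma:=\max_{S}\gamma(S)<\frac1r$ is attained over the finitely many relevant $S$, and replacing it by any value in $\bigl(\max(0,\gamma),\,\frac1r\bigr)$ secures $0<\gamma<\frac1r$ while only loosening both inequalities (which persist for any $\gamma<1$). I expect the main obstacle to be conceptual rather than computational. The naive route of summing the per-subset ratios over all multi-node witnesses $T\subseteq S$ double-counts both the ``hard'' nodes and the singleton-covered nodes and does not close; the observation that dissolves this is that for the query set itself $b(S)$ \emph{already} equals $N\,h(S)$ while the singleton-covered count is \emph{exactly} $N\,\tilde h(S)$, so one application of Assumption~\ref{assum:large_perturbation} to $S$ collapses the estimate to the single line above. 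The one point to state carefully is that $b(S)$ must be read as the exact number of nodes vulnerable to $S$ (not merely a lower bound), which is what makes $N\,h(S)=b(S)$ hold with equality.
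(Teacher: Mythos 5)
Your proof is correct and takes essentially the same route as the paper's: you build $\tilde h$ from singleton witnesses (your $U_i$ coincides with the paper's $\widetilde{B}_i$, since the singleton elements of $A_i$ and of $B_i$ agree), establish monotone submodularity by a coverage argument (the content of the paper's Lemma~\ref{lemma:submodular}), and get the two-sided bound by instantiating Assumptions~\ref{assum:homophily} and~\ref{assum:large_perturbation} at the query set $S$ itself, yielding $1 \le h(S)/\tilde h(S) \le 1/p(S) < 1+\frac{1}{r}$ exactly as the paper does. Your extra care about reading $b(S)$ as an exact count, about the $\emptyset \in A_i$ edge case, and about choosing a uniform $\gamma \in (0,\frac{1}{r})$ only makes explicit points the paper glosses over.
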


As greedy methods are guaranteed to enjoy a constant approximation ratio for such approximately submodular functions~\citep{horel2016maximization}, Proposition~\ref{prop-approx-submodular} motivates us to develop a greedy correction procedure to compensate the diminishing-return effect when calculating the importance scores. 

\vpara{The greedy correction procedure.}
We propose an iterative node selection procedure and apply two greedy correction steps on top of the RWCS strategy, motivated by Assumption~\ref{assum:homophily} and \ref{assum:large_perturbation}.

To accommodate Assumption~\ref{assum:homophily}, after each node is selected into the attack set, we exclude a $k$-hop neighborhood of the selected node for next iteration, for a given constant integer $k$. The intuition is that nodes in a local neighborhood may contribute to similar target nodes due to homophily. To accommodate Assumption~\ref{assum:large_perturbation}, we adopt an adaptive version of RWCS scores. First, we binarize the $L$-step random walk transition matrix $M^L$ as $\widetilde{M}$, i.e., 
\begin{equation}
\label{eq:binary-M}
\left[\widetilde{M}\right]_{ij} =  
\left\{
            \begin{array}{lr}
             1, &  \text{ if } \text{$[M^L]_{ij}$ is among Top-$l$ of $[M^L]_i$} \text{ and } [M^L]_{ij}\neq 0,\\
             0,& \text{otherwise,}
             \end{array}
\right.
\end{equation}
where $l$ is a given constant integer. Next, we define a new adaptive influence score as a function of a matrix $Q$: $\widetilde{I}_{i}(Q) = \sum_{j=1}^{N} [Q]_{ji}.$
In the iterative node selection procedure, we initialize $Q$ as $\widetilde{M}$. We select the node with highest score  $\widetilde{I}_{i}(Q)$ subsequently. After each iteration, suppose we have selected the node $i$ in this iteration, we will update $Q$ by setting to zero for all the rows where the elements of the $i$-th column are $1$. The underlying assumption of this operation is that, adding $i$ to the selected set is likely to mis-classify all the target nodes corresponding to the aforementioned rows, which complies Assumption~\ref{assum:large_perturbation}. We name this iterative procedure as the \textbf{GC-RWCS} (Greedily Corrected RWCS) strategy, and summarize it in Algorithm~\ref{alg:attack}. 

\begin{algorithm}[H]
\caption{The GC-RWCS Strategy for Node Selection.}

\label{alg:attack}
  \KwIn{number of nodes limit $r$; maximum degree limit $m$; neighbor hops $k$; binarized transition matrix $\widetilde{M}$; the adaptive influence score function $\widetilde{I}_i, \forall i\in V$.}
  \KwOut{the set $S$ to be attacked.}
  Initialize the candidate set $P=\{i\in V\mid d_i\leq m\}$, and the score matrix $Q=\widetilde{M}$\;
  Initialize $S=\emptyset$\;
  \For{$t=1,2,\ldots,r$} {
    $z \leftarrow \argmax_{i\in P} \widetilde{I}_i(Q)$\;
    $S\leftarrow S\cup \{z\}$\;
    $P \leftarrow P\setminus\{i\in P \mid \text{shortest-path}(i,z)\leq k\}$\;
    $q \leftarrow Q_{\cdot,z}$\;
    \For{$i\in V$} {
        \If{$q_i$ is $1$}{
            $Q_i \leftarrow \textbf{0}$\;
        }
    }
  }
  \Return $S$\;
\end{algorithm}

Finally, we want to mention that, while the derivation of RWCS and GC-RWCS requires the knowledge of the number of layers $L$ for GCN, we find that the empirical performance of the proposed attack strategies are not sensitive w.r.t. the choice of $L$. Therefore, the proposed methods are applicable to the black-box setup where we do not know the exact $L$ of the model.
\section{Experiments}
\subsection{Experiment Setup}
\vpara{GNN models.} We evaluate the proposed attack strategies on two common GNN models, GCN~\citep{kipf2016semi} and JK-Net~\citep{xu2018representation}. For JK-Net, we test on its two variants, JKNetConcat and JKNetMaxpool, which apply concatenation and element-wise max at last layer respectively.  We set the number of layers for GCN as 2 and the number of layers for both JK-Concat and JK-Maxpool as 7. The hidden 
size of each layer is 32. For the training, we closely follow the hyper-parameter setup in~\citet{xu2018representation}. 

\vpara{Datasets.} We adopt three citation networks, Citeseer, Cora, and Pubmed, which are standard node classification benchmark datasets~\citep{sen2008collective,yang2016revisiting}. 
Following the setup of JK-Net~\citep{xu2018representation}, we randomly split each dataset by $60\%$, $20\%$, and $20\%$ for training, validation, and testing. And we draw 40 random splits.

\vpara{Baseline methods for comparison.}
As we summarized in Section~\ref{sec:realed-attack}, our proposed black-box adversarial attack setup is by far the most restricted, and none of existing attack strategies for GNN can be applied. We compare the proposed attack strategies with baseline strategies by selecting nodes with top centrality metrics. We compare with three well-known network metrics capturing different aspects of node centrality: \textbf{Degree}, \textbf{Betweenness}, and \textbf{PageRank} and name the attack strategies correspondingly. In classical network analysis literature~\citep{newman2018networks}, real-world networks are shown to be fragile under attacks to high-centrality nodes.  Therefore we believe these centrality metrics serve as reasonable baselines under our restricted black-box setup.  For the purpose of sanity check, we also include a trivial baseline \textbf{Random}, which randomly selects the nodes to be attacked.

\vpara{Hyper-parameters for GC-RWCS.} For the proposed GC-RWCS strategy, we fix the number of step $L=4$, the neighbor-hop parameter $k=1$ and the parameter $l=30$ for the binarized $\widetilde{M}$ in Eq.~(\ref{eq:binary-M}) for all models on all datasets. Note that $L=4$ is different from the number of layers of both GCN and JK-Nets in our experiments. But we achieve effective attack performance. We also conduct a sensitivity analysis in Appendix~\ref{sup:results} and demonstrate the proposed method is not sensitive w.r.t. $L$. 

\vpara{Nuisance parameters of the attack procedure.}
For each dataset, we fix the limit on the number of nodes to attack, $r$, as $1\%$ of the graph size. 
After the node selection step, we also need to specify how to perturb the node features, i.e., the design of $\epsilon$ in $\tau$ function in the optimization problem~(\ref{eq:white-opt}). In a real-world scenario, $\epsilon$ should be designed with domain knowledge about the classification task, without access to the GNN models. In our experiments, we have to simulate the domain knowledge due to the lack of semantic meaning of each individual feature in the benchmark datasets. 
Formally, we construct the constant perturbation $\epsilon \in \reals^D$ as follows, for $j = 1,2,\ldots,D$,
\begin{equation}
\label{eq:noise}
\epsilon_j =  
\left\{
        \begin{array}{lr}
        \lambda\cdot \text{sign}(\sum_{i=1}^N\frac{\partial \mathcal{L}(H,y)}{\partial X_{ij}}), &  \text{if } j \in \text{arg top-$J$}\sbra \left[
        \abs{\sum_{i=1}^N\frac{\partial \mathcal{L}(H,y)}{\partial X_{il}}}\right]_{l=1,2,\ldots,D}\sket,\\
         0,& \text{otherwise,}
         \end{array}
\right.
\end{equation}
where $\lambda$ is the magnitude of modification. We fix $J=\lfloor 0.02D \rfloor$ for all datasets. While gradients of the model are involved, we emphasize that we only use extremely limited information of the gradients: determining a few number of important features and the binary direction to perturb for each selected feature, only at the \textit{global level} by averaging gradients on all nodes. We believe such coarse information is usually available from domain knowledge about the classification task. The perturbation magnitude for each feature is fixed as a constant $\lambda$ and is irrelevant to the model. In addition, the \textit{same} perturbation vector is added to the features of \textit{all} the selected nodes. The construction of the perturbation is totally independent of the selected nodes.

\subsection{Experiment Results}
\label{sec:results}
\vpara{Verifying the discrepancy between the loss and the mis-classification rate.}
We first provide empirical evidence for the discrepancy between classification loss (cross-entropy) and mis-classification rate. We compare the RWCS strategy to baseline strategies with varying perturbation strength as measured by $\lambda$ in Eq.~(\ref{eq:noise}). The results shown in Figure~\ref{fig:diminish} are obtained by attacking GCN on Citeseer. First, we observe that RWCS increases the classification loss almost linearly as $\lambda$ increases, indicating our approximation of the loss by first-order Taylor expansion actually works pretty well in practice. Not surprisingly, RWCS performs very similarly as PageRank. And RWCS performs much better than other centrality metrics in increasing the classification loss, showing the effectiveness of Proposition~\ref{prop-rw}. However, we see the decrease of classification accuracy when attacked by RWCS (and PageRank) quickly saturates as $\lambda$ increases. The GC-RWCS strategy that is proposed to correct the importance scores is able to decreases the classification accuracy the most as $\lambda$ becomes larger, although it increases the classification loss the least. 

\begin{figure}[!htb]
\vskip -10pt
    \centering
    \begin{subfigure}{0.4\textwidth}
        \centering
        \includegraphics[width=1\linewidth]{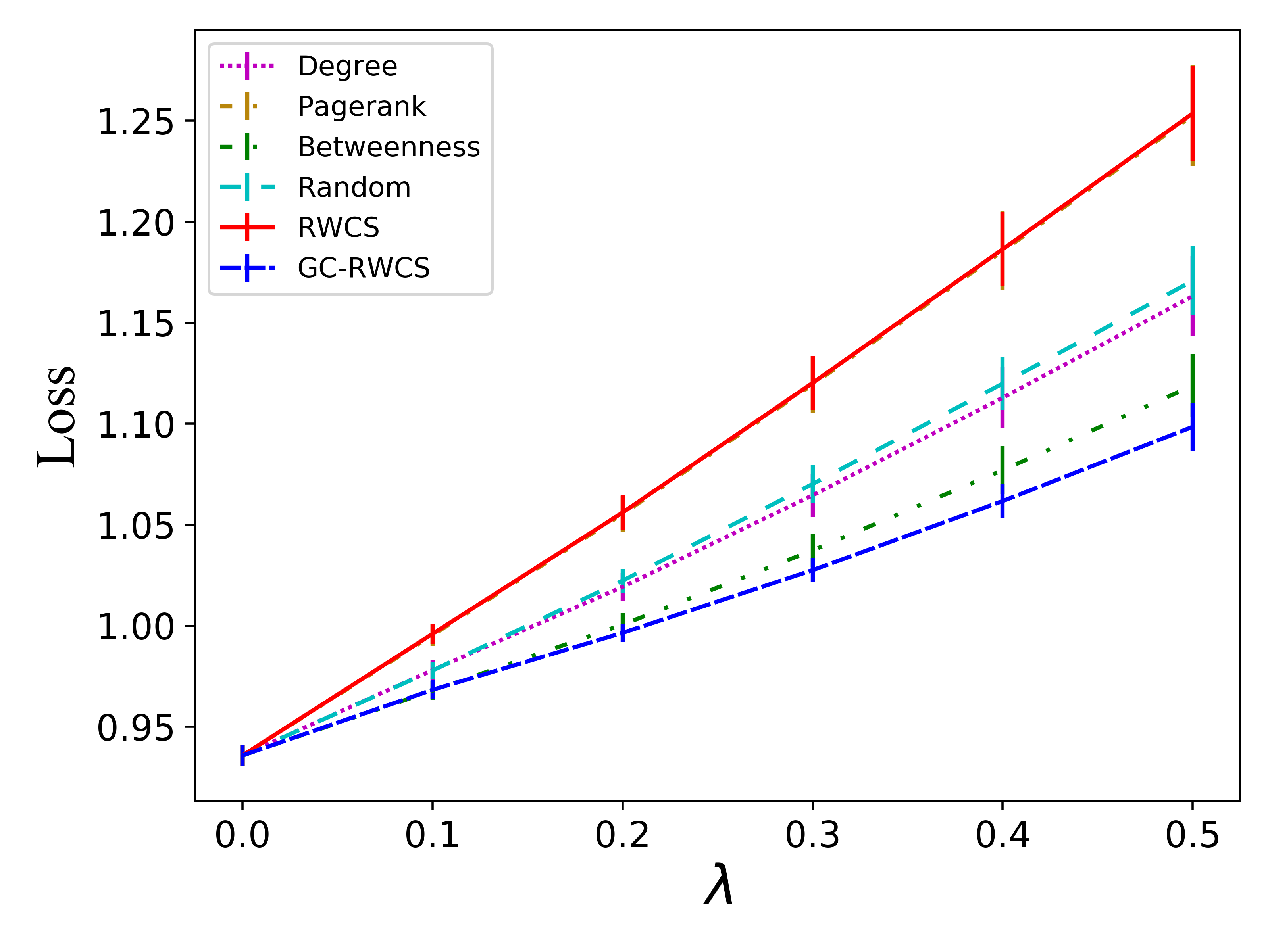}
        \caption{Loss on Test Set}
        \label{fig:loss}
    \end{subfigure}
    \begin{subfigure}{0.4\textwidth}
        \centering
        \includegraphics[width=1\linewidth]{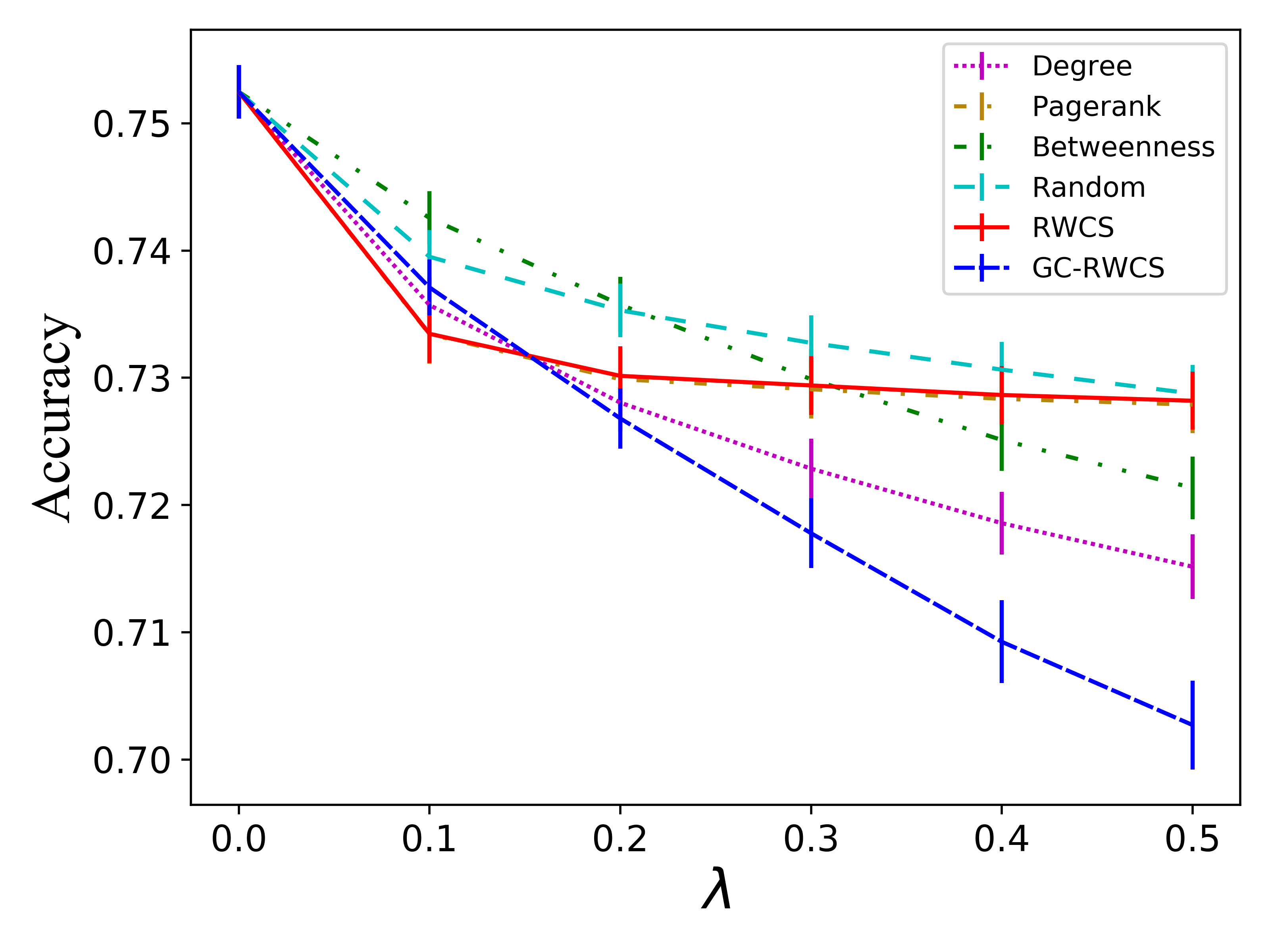}
        \caption{Accuracy on Test Set}
        \label{fig:acc}
    \end{subfigure}
    \caption{Experiments of attacking GCN on Citeseer with increasing perturbation strength $\lambda$. Results are averaged over 40 random trials and error bars indicate standard error of mean.}
    \label{fig:diminish}
\vskip -10pt
\end{figure}

\vpara{Full experiment results.}
We then provide the full experiment results of attacking GCN, JKNetConcat, and JKNetMaxpool on all three datasets in Table~\ref{tab:result}. The perturbation strength is set as $\lambda=1$. The thresholds $10\%$ and $30\%$ indicate that we set the limit on the maximum degree $m$ as the lowest degree of the top $10\%$ and $30\%$ nodes respectively. 

The results clearly demonstrate the effectiveness of the proposed GC-RWCS strategy. GC-RWCS achieves the best attack performance on almost all experiment settings, and the difference to the second-best strategy is significant in almost all cases. It is also worth noting that the proposed GC-RWCS strategy is able to decrease the node classification accuracy by up to $33.5\%$, and GC-RWCS achieves a $70\%$ larger decrease of the accuracy than the Random baseline in most cases (see Table~\ref{tab:ratio} in Appendix~\ref{sup:results}). And this is achieved by merely adding the same constant perturbation vector to the features of $1\%$ of the nodes in the graph. This verifies that the explicit structural inductive biases of GNN models make them vulnerable even in the extremely restricted black-box attack setup. 
\begin{table}[htbp]
\vskip -10pt
  \centering
  \caption{Summary of the attack performance. The lower the accuracy (in $\%$) the better the attacks. The \textbf{bold} marker denotes the best performance. The asterisk (*) means the difference between the best strategy and the second-best strategy is statistically significant by a t-test at significance level 0.05. The error bar ($\pm$) denotes the standard error of the mean by 40 independent trials.}
    \scalebox{0.6}{
    \begin{tabular}{l|lll|lll|lll}
          & \multicolumn{3}{c}{Cora} & \multicolumn{3}{|c}{Citeseer } & \multicolumn{3}{|c}{Pubmed} \\
           Method & GCN   & JKNetConcat & JKNetMaxpool & GCN   & JKNetConcat & JKNetMaxpool & GCN   & JKNetConcat & JKNetMaxpool \\
   None & 85.6 $\pm$ 0.3 & 86.2 $\pm$ 0.2 & 85.8 $\pm$ 0.3 & 75.1 $\pm$ 0.2 & 72.9 $\pm$ 0.3 & 73.2 $\pm$ 0.3 & 85.7 $\pm$ 0.1 & 85.8 $\pm$ 0.1 & 85.7 $\pm$ 0.1 \\
    \multicolumn{10}{c}{Threshold $10\%$} \\
    Random & 81.3 $\pm$ 0.3 & 68.8 $\pm$ 0.8 & 68.8 $\pm$ 1.3 & 71.3 $\pm$ 0.3 & 60.8 $\pm$ 0.8 & 61.7 $\pm$ 0.9 & 82.0 $\pm$ 0.3 & 75.9 $\pm$ 0.7 & 75.4 $\pm$ 0.7 \\
    Degree & \textbf{78.2} $\pm$ 0.4 & 60.7 $\pm$ 1.0 & 59.9 $\pm$ 1.5 & 67.5 $\pm$ 0.4 & 52.5 $\pm$ 0.8 & 53.7 $\pm$ 1.0 & 78.9 $\pm$ 0.5 & 63.4 $\pm$ 1.0 & 63.3 $\pm$ 1.2 \\
    Pagerank & 79.4 $\pm$ 0.4 & 71.6 $\pm$ 0.6 & 70.0 $\pm$ 1.0 & 70.1 $\pm$ 0.3 & 61.5 $\pm$ 0.5 & 62.6 $\pm$ 0.6 & 80.3 $\pm$ 0.3 & 71.3 $\pm$ 0.8 & 71.2 $\pm$ 0.8 \\
    Betweenness & 79.7 $\pm$ 0.4 & 60.5 $\pm$ 0.9 & 60.3 $\pm$ 1.6 & 68.9 $\pm$ 0.3 & 53.5 $\pm$ 0.8 & 55.1 $\pm$ 1.0 & 78.5 $\pm$ 0.6 & 67.1 $\pm$ 1.1 & 66.2 $\pm$ 1.1 \\
    RWCS  & 79.5 $\pm$ 0.3 & 71.2 $\pm$ 0.5 & 69.9 $\pm$ 1.0 & 69.9 $\pm$ 0.3 & 60.8 $\pm$ 0.6 & 62.2 $\pm$ 0.7 & 79.8 $\pm$ 0.3 & 70.7 $\pm$ 0.8 & 70.7 $\pm$ 0.8 \\
    GC-RWCS & 78.5 $\pm$ 0.5 & \textbf{52.7} $\pm$ 1.0* & \textbf{53.3} $\pm$ 1.9* & \textbf{65.1} $\pm$ 0.5* & \textbf{46.6} $\pm$ 0.8* & \textbf{48.2} $\pm$ 1.1* & \textbf{77.3} $\pm$ 0.7 & \textbf{62.1} $\pm$ 1.2 & \textbf{60.6} $\pm$ 1.4* \\
    \multicolumn{10}{c}{Threshold $30\%$} \\
    Random & 82.6 $\pm$ 0.4 & 70.7 $\pm$ 1.1 & 71.8 $\pm$ 1.1 & 72.6 $\pm$ 0.3 & 62.7 $\pm$ 0.8 & 63.9 $\pm$ 0.8 & 82.6 $\pm$ 0.2 & 77.3 $\pm$ 0.4 & 77.4 $\pm$ 0.5 \\
    Degree & \textbf{80.7} $\pm$ 0.4 & 64.9 $\pm$ 1.4 & 67.0 $\pm$ 1.5 & 70.4 $\pm$ 0.4 & 56.9 $\pm$ 0.8 & 58.7 $\pm$ 0.9 & 81.5 $\pm$ 0.4 & 72.4 $\pm$ 0.7 & 72.3 $\pm$ 0.7 \\
    Pagerank & 82.6 $\pm$ 0.3 & 79.6 $\pm$ 0.4 & 79.7 $\pm$ 0.4 & 72.9 $\pm$ 0.2 & 70.2 $\pm$ 0.3 & 70.3 $\pm$ 0.3 & 83.0 $\pm$ 0.2 & 79.3 $\pm$ 0.3 & 79.6 $\pm$ 0.3 \\
    Betweenness & 81.8 $\pm$ 0.4 & 64.1 $\pm$ 1.3 & 65.9 $\pm$ 1.4 & 70.7 $\pm$ 0.3 & 56.3 $\pm$ 0.8 & 58.3 $\pm$ 0.9 & 81.3 $\pm$ 0.3 & 74.1 $\pm$ 0.5 & 74.6 $\pm$ 0.5 \\
    RWCS  & 82.8 $\pm$ 0.3 & 79.3 $\pm$ 0.5 & 79.5 $\pm$ 0.4 & 72.9 $\pm$ 0.2 & 69.8 $\pm$ 0.3 & 70.1 $\pm$ 0.3 & 82.1 $\pm$ 0.2 & 77.8 $\pm$ 0.3 & 78.4 $\pm$ 0.3 \\
    GC-RWCS & \textbf{80.7} $\pm$ 0.5 & \textbf{59.1} $\pm$ 1.6* & \textbf{61.1} $\pm$ 1.6* & \textbf{67.8} $\pm$ 0.5* & \textbf{49.0} $\pm$ 0.9* & \textbf{50.7} $\pm$ 1.1* & \textbf{80.3} $\pm$ 0.5* & \textbf{69.2} $\pm$ 0.7* & \textbf{70.0} $\pm$ 0.7* \\
    \end{tabular}%
    }
  \label{tab:result}%
\vskip -10pt
\end{table}%

\section{Conclusion}
In this paper, we propose a novel black-box adversarial attack setup for GNN models with constraint of limited node access, which we believe is by far the most restricted and realistic black-box attack setup. Nonetheless, through both theoretical analyses and empirical experiments, we demonstrate that the strong and explicit structural inductive biases of GNN models make them still vulnerable to this type of adversarial attacks. We also propose a principled attack strategy, GC-RWCS, based on our theoretical analyses on the connection between the GCN model and random walk, which corrects the diminishing-return effect of the mis-classification rate. Our experimental results show that the proposed strategy significantly outperforms competing attack strategies under the same setup.

\subsubsection*{Acknowledgments}
This work was in part supported by the National Science Foundation under grant numbers 1633370 and 1620319.
\section*{Broader Impact}
For the potential positive impacts, we anticipate that the work may raise the public attention about the security and accountability issues of graph-based machine learning techniques, especially when they are applied to real-world social networks. Even without accessing any information about the model training, the graph structure alone can be exploited to damage a deep learning framework with a rather executable strategy.  

On the potential negative side, as our work demonstrates that there is a chance to attack existing GNN models effectively without any knowledge but a simple graph structure, this may expose a serious alert to technology companies who maintain the platforms and operate various applications based on the graphs. However, we believe making this security concern transparent can help practitioners detect potential attack in this form and better defend the machine learning driven applications.

\bibliographystyle{plainnat}
\bibliography{citation}

\newpage
\appendix
\section{Proofs}
\subsection{Proof of Proposition~\ref{prop-rw}}
\label{sup:proof-rw}
We first remind the reader for some notations, a GCN model is denoted as a function $f$, the feature matrix is $X\in \reals^{N\times D}$, and the output logits $H=f(X) \in \reals^{N\times K}$. The $L$-step random walk transition matrix is $M^L$. More details can be found in in Section~\ref{sec:prelim}

We give in Lemma~\ref{lemma:xu} the connection between GCN models and random walks. Lemma~\ref{lemma:xu} relies on a technical assumption about the GCN model (Assumption~\ref{assum:relu}) and the proof can be found in \citet{xu2018representation}. 
\begin{assumption} [\citet{xu2018representation}]
\label{assum:relu}
All paths in the computation graph of the given GCN model are independently activated with the same probability of success $\rho$.
\end{assumption}

\begin{lemma} 
\label{lemma:xu}
(\citet{xu2018representation}.) Given an $L$-layer GCN with averaging as $\alpha_{i,j}=1/d_i$ in Eq.~\ref{eq:gcn}, assume that all path in the computation graph of the model are activated with the same probability of success $\rho$ (Assumption~\ref{assum:relu}). Then, for any node $i,j\in V$, 
\begin{equation}
     \mathbb{E}\left[\frac{\partial H_j}{\partial X_i}\right]=\rho \cdot \prod_{l=L}^{1} \boldsymbol{W}_l [M^L]_{ji},
      \label{randomwalk}
\end{equation}
where $\boldsymbol{W}_l$ is the learnable parameter at $l$-th layer.
\end{lemma}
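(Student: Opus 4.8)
The plan is to identify $B_i$ explicitly as the antichain of inclusion-minimal members of $A_i$, to dispose of two degenerate cases by hand, and then to prove uniqueness by a double-inclusion argument. The first thing I would record is that condition~3) of Definition~\ref{def:bvs} is merely a restatement of the antichain property: for distinct $S,T$, taking $\min(|S|,|T|)=|S|$ without loss of generality, the inequality $|S\cap T|<|S|$ holds exactly when $S\not\subseteq T$, and combined with the size comparison this is equivalent to $S$ and $T$ being incomparable under inclusion. So condition~3) says precisely that no member of $B_i$ contains another. With this in hand the degenerate cases are immediate: if $\emptyset\in A_i$ then condition~1) directly forces $B_i=\emptyset$, and the empty family vacuously satisfies 2) and 3); and if $A_i=\emptyset$ then $B_i\subseteq A_i$ again forces $B_i=\emptyset$. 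Both give existence and uniqueness at once, leaving only the main case $\emptyset\notin A_i$ and $A_i\neq\emptyset$.

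For the main case I would define $B_i^{\star}$ to be the set of inclusion-minimal members of $A_i$, i.e.\ those $S\in A_i$ with no proper subset in $A_i$, and check it satisfies Definition~\ref{def:bvs}. Since $V$ is finite, $2^V$ is a finite poset and every nonempty subcollection has minimal elements; hence $B_i^{\star}\neq\emptyset$, and no member of $B_i^{\star}$ is $\emptyset$ (because $\emptyset\notin A_i$), which gives condition~1). For condition~2), any nonempty $S\in A_i$ admits a minimal element among its subsets lying in $A_i$, and such an element is minimal in all of $A_i$ and contained in $S$, so it lies in $B_i^{\star}$. Condition~3) holds because minimal elements always form an antichain. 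Thus $B_i^{\star}$ is a valid basic vulnerable set, settling existence.

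For uniqueness, let $B_i$ be any family satisfying the three conditions in the main case; I would show $B_i=B_i^{\star}$ by double inclusion. For $B_i\subseteq B_i^{\star}$, suppose some $T\in B_i$ were not minimal in $A_i$; then a nonempty $T'\subsetneq T$ lies in $A_i$ (nonempty since $\emptyset\notin A_i$), and condition~2) supplies $T''\in B_i$ with $T''\subseteq T'\subsetneq T$, so $T''\neq T$ are comparable members of $B_i$, contradicting condition~3). Hence every member of $B_i$ is minimal, i.e.\ $B_i\subseteq B_i^{\star}$. For $B_i^{\star}\subseteq B_i$, take any minimal $M\in A_i$; condition~2) gives $T\in B_i$ with $T\subseteq M$, and by the inclusion just proved $T$ is itself minimal, so minimality of $M$ forces $T=M$ and thus $M\in B_i$. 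Therefore $B_i=B_i^{\star}$, establishing uniqueness.

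The whole argument is a finiteness-plus-antichain bookkeeping exercise, so the only step demanding real care is the inclusion $B_i\subseteq B_i^{\star}$: it is precisely the interplay of condition~2) (which forces some member of $B_i$ to sit below any nonempty attack set) with condition~3) (which forbids nested members) that rules out non-minimal members of $B_i$. I expect this interplay, together with keeping the $\emptyset\in A_i$ convention cleanly separated from the generic minimal-element description, to be the main subtlety. I would also note that the monotonicity assumption (Assumption~\ref{assum:monotonic}) is not actually needed for this lemma, and I would not invoke it.
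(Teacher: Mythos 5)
Your proposal does not prove the statement in question. The statement is Lemma~\ref{lemma:xu}, an analytic claim about the backward pass of an $L$-layer GCN: that under Assumption~\ref{assum:relu} the expected Jacobian satisfies $\mathbb{E}\left[\partial H_j/\partial X_i\right]=\rho\cdot\prod_{l=L}^{1}\boldsymbol{W}_l\,[M^L]_{ji}$. What you have written is instead a proof of Lemma~\ref{lemma:existence} --- the existence and uniqueness of the basic vulnerable set $B_i$ of Definition~\ref{def:bvs}. Vulnerable sets, antichains, and inclusion-minimal elements belong to the diminishing-return analysis of Section~\ref{sec:diminish} and have no bearing on the gradient--random-walk identity; no part of your argument engages with the GCN layers, the activation probability $\rho$, the weight matrices $\boldsymbol{W}_l$, or the transition matrix $M^L$.

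A proof of Lemma~\ref{lemma:xu} would be combinatorial over the \emph{computation graph}, not over attack sets: unroll the chain rule through the $L$ layers of Eq.~(\ref{eq:gcn}), express $\partial H_j/\partial X_i$ as a sum over length-$L$ paths from $j$ to $i$, each contributing a product of the averaging coefficients $1/d$, the matrices $\boldsymbol{W}_l$, and diagonal $0$--$1$ matrices recording which ReLU units are active; Assumption~\ref{assum:relu} lets one take expectations path by path, extracting the common factor $\rho$, and the remaining sum of degree-normalized path weights is precisely $[M^L]_{ji}$. The paper itself does not reproduce this argument but defers it to \citet{xu2018representation}. As an aside, had your proof been offered for Lemma~\ref{lemma:existence}, it would be correct and arguably cleaner than the paper's: the paper builds $B_i$ by a scan-and-skip construction and proves uniqueness by contradiction, whereas you characterize $B_i$ outright as the antichain of inclusion-minimal members of $A_i$ and get uniqueness by double inclusion; your observation that Assumption~\ref{assum:monotonic} is not needed there is also accurate. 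But for the statement actually posed, the proposal is a complete miss.
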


Then we are able to prove Proposition~\ref{prop-rw} below.
\begin{proof}
First, we derive the gradient of the loss $\mcL(H, y)$ w.r.t. the feature $X_i$ of node $i$,
\begin{align}
  \nabla_{X_i} \mcL(H, y) &= \nabla_{X_i} \sbra\sum_{j=1}^{N}\mcL_j(H_j,y_j)\sket \nonumber\\
  &= \sum_{j=1}^{N} \nabla_{X_i} \mcL_j(H_j, y_j) \nonumber\\
  &= \sum_{j=1}^{N} \sbra\frac{\partial H_j}{\partial X_i}\sket^T \frac{\partial \mcL_j(H_j, y_j)}{\partial H_j}, \label{eq:grad}
\end{align}
where $H_j$ is the $j$th row of $H$ but being transposed as column vectors and $y_j$ is the true label of node $j$. Note that $\frac{\partial \mcL_j(H_j, y_j)}{\partial H_j}\in \reals^K$, and $\frac{\partial H_j}{\partial X_i} \in \reals^{K\times D}$. 

Next, we plug Eq.~\ref{eq:grad} into $\wDelta_i\sbra x\sket|_{x=\tau(X, \{i\})_i}$. For simplicity, We write $\wDelta_i\sbra x\sket|_{x=\tau(X, \{i\})_i}$ as $\wDelta_i$ in the rest of the proof.
\begin{align}
    \wDelta_i &= \sbra\nabla_{X_i}\mathcal{L}(H, y)\sket^T  \epsilon \nonumber\\
    &=\sum_{j=1}^{N} \sbra\frac{\partial \mcL_j(H_j, y_j)}{\partial H_j}\sket^T \frac{\partial H_j}{\partial X_i} \epsilon. \label{eq:delta_taylor}
\end{align}

Denote $a^j \triangleq \frac{\partial \mcL_j(H_j, y_j)}{\partial H_j}  \in \mathbb{R}^{K}$. From the definition of loss 
\[\mcL_j(H_j, y_j) =  \max_{k\in \{1,\ldots,K\}} H_{jk} - H_{jy_{j}},\]
we have
\begin{align*}
  a^j_k=   
  \left\{
            \begin{array}{lr}
             -1, &  \text{ if } k= y_j \text{ and } y_j\neq \argmax_{c\in \{1,\ldots,K\}} H_{jc},\\
             1, & \text{ if } k \neq y_j \text{ and } k = \argmax_{c\in \{1,\ldots,K\}} H_{jc},\\
             0, & \text{otherwise,}
             \end{array}
  \right.
\end{align*}
for $k=1,2,\ldots,K$. Under Assumption~\ref{assum:label}, the expectation of each element of $a^j$ is
\[\E[a^j_k]=-q_k(1-p(k \mid k))+\sum_{w=1,w\neq k}^{K}p(k\mid w)q_w, k=1,2, \ldots, K\]
which is a constant independent of $H_j$ and $y_j$. Therefore, we can write
\[\E[a^j] = c, \forall j=1,2,\ldots,N,\]
where $c\in \reals^K$ is a constant vector independent of $j$.

Taking expectation of Eq.~(\ref{eq:delta_taylor}) and plug in the result of Lemma~\ref{lemma:xu},
\begin{align*}
    \E\left[\wDelta_i \right] & \approx \mathbb{E}\left[\sum_{j=1}^{N} \sbra\frac{\partial \mcL_j(H_j, y_j)}{\partial H_j}\sket^T \frac{\partial H_j}{\partial X_i} \epsilon \right]\\
    &= \sum_{j=1}^{N} \E[a^j]^T \sbra\rho \prod_{l=L}^{1} W_{l} [M^L]_{ji}\sket \epsilon  \\
    &= \sbra\rho c^T \prod_{l=L}^{1} W_{l} \epsilon \sket \sum_{j=1}^N [M^L]_{ji} \\
    &= C\sum_{j=1}^{N} [M^L]_{ji},
\end{align*}
where $C=\rho c^T \prod_{l=L}^{1} W_{l} \epsilon$ is a constant scalar independent of $i$.
\end{proof}

\subsection{Proofs for the Results in Section~\ref{sec:diminish}}
\label{sup:proof-diminish}

\vpara{Proof of Lemma~\ref{lemma:existence}.}
\begin{proof}
If $A_i=\emptyset$, $B_i\subseteq A_i$ so $B_i=\emptyset$. The three conditions of Definition~\ref{def:bvs} are also trivially true. Below we investigate the case $A_i\neq \emptyset$.

The existence can be given by a constructive proof. We check the nonempty elements in $A_i$ one by one with any order. If this element is a super set of any other element in $A_i$, we skip it. Otherwise, we put it into $B_i$. Then we verify that the resulted $B_i$ is a basic vulnerable set for $i$. $B_i\subseteq A_i$. For condition 1), clearly, $\emptyset \notin B_i$ and if $\emptyset\in A_i$, all nonempty elements in $A_i$ are skipped so $B_i=\emptyset$. For condition 2), given $\emptyset\notin A_i$, for any nonempty $S\in A_i$, if $S\in B_i$, the condition holds. If $S\notin B_i$, by construction, there exists a nonempty strict subset $S_1\subset S$ and $S_1\in A_i$. If $S_1\in B_i$, the condition holds. If $S_1\notin B_i$, we can similarly find a nonempty strict subset $S_2\subset S$ and $S_2\in A_i$. Recursively, we can get a series $S\supset S_1\supset S_2\supset \cdots$. As $S$ is finite, we will have a set $S_k$ that no longer has strict subset so $S_k\in B_i$. Therefore the condition holds. Condition 3) means any set in $B_i$ is not a subset of another set in $B_i$. This condition holds by construction.

Now we prove the uniqueness. Suppose there are two distinct basic vulnerable sets $B_i \neq C_i$. Without loss of generality, we assume $S\in B_i$ but $S\notin C_i$. $B_i\neq\emptyset$ so $\emptyset\notin A_i$. Further $S\in A_i$, hence $C_i\neq \emptyset$. As $S\in B_i \subseteq A_i$, $S\neq \emptyset$, and $C_i$ satisfies condition 2), there will be a nonempty $T\in C_i$ s.t. $T\subset S$. If $T\in B_i$, then condition 3) is violated for $B_i$. If $T\notin B_i$, there will be a nonempty $T'\in B_i$ s.t. $T'\subset T$. But $T'\subset S$ also violates condition 3). By contradiction we prove the uniqueness.
\end{proof}

In order to prove Proposition~\ref{prop-approx-submodular}, we first would like to construct a submodular function that is close to $h$, with the help of Lemma~\ref{lemma:submodular} below.

\begin{lemma}
\label{lemma:submodular}
If $\forall i\in V$, $B_i$ is either empty or only contains singleton sets, then $h$ is submodular.
\end{lemma}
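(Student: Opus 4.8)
The plan is to show that under the stated hypothesis, the mis-classification rate $h$ decomposes into a sum of functions of the form ``does $S$ intersect a fixed set of singletons,'' and that each such function is submodular; since a non-negative linear combination of submodular functions is submodular, $h$ inherits submodularity. First I would recall that $h(S) = \frac{1}{N}\sum_{i=1}^N g_i(S)$, so by the closure of submodular functions under non-negative linear combinations it suffices to prove that each $g_i$ is submodular (the factor $\frac{1}{N}$ is harmless). Fix a target node $i\in V$. I would split into the degenerate cases first: if $A_i = \emptyset$ then $g_i\equiv 0$ is trivially submodular, and if $\emptyset\in A_i$ then by Assumption~\ref{assum:monotonic} and monotonicity $g_i\equiv 1$, which is also submodular (constant functions are modular). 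So the substantive case is $\emptyset\notin A_i$ and $B_i\neq\emptyset$.

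In the substantive case, the hypothesis tells us $B_i$ consists only of singleton sets, say $B_i = \{\{v_1\},\ldots,\{v_{t}\}\}$ for distinct nodes $v_1,\ldots,v_t$. The key structural step is to argue, using the defining properties of the basic vulnerable set from Definition~\ref{def:bvs} together with Assumption~\ref{assum:monotonic}, that for any $S\subseteq V$,
\[
g_i(S) = \ind\!\left[S\cap \{v_1,\ldots,v_t\}\neq \emptyset\right].
\]
The $\Leftarrow$ direction is monotonicity: if some $\{v_k\}\subseteq S$ and $\{v_k\}\in B_i\subseteq A_i$, then $g_i(\{v_k\})=1$ forces $g_i(S)=1$. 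The $\Rightarrow$ direction uses condition 2) of Definition~\ref{def:bvs}: if $g_i(S)=1$ then $S\in A_i$ and $S\neq\emptyset$, so there exists $T\in B_i$ with $T\subseteq S$; since every element of $B_i$ is a singleton, $T=\{v_k\}$ for some $k$, hence $S$ meets $\{v_1,\ldots,v_t\}$.

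Finally I would verify that the indicator of ``$S$ hits a fixed subset $R\triangleq\{v_1,\ldots,v_t\}$'' is submodular. This is a standard coverage-type fact: define $u(S)=\ind[S\cap R\neq\emptyset]$ and check the diminishing-returns inequality $u(S\cup\{w\}) - u(S)\ge u(T\cup\{w\}) - u(T)$ for $S\subseteq T$ and $w\notin T$. The marginal gain is either $0$ or $1$, and it can equal $1$ only when $S\cap R=\emptyset$ and $w\in R$; whenever that happens for the larger set $T$ it also happens for the smaller set $S$ (since $T\cap R=\emptyset\Rightarrow S\cap R=\emptyset$), which gives exactly the required inequality. I expect the main obstacle to be the careful handling of the degenerate cases and, more importantly, nailing the exact-characterization step $g_i(S)=\ind[S\cap R\neq\emptyset]$: it is tempting to assert it directly, but a clean argument really must invoke both condition 2) of the basic vulnerable set (to get the forward implication) and monotonicity (for the reverse), so I would present those two implications explicitly rather than gesture at them.
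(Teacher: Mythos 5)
Your proof is correct, and it hinges on exactly the same key step as the paper's: the characterization, via condition 2) of Definition~\ref{def:bvs} plus Assumption~\ref{assum:monotonic}, that $g_i(S)=1$ if and only if $B_i=\emptyset$ or $S$ contains one of the singletons in $B_i$. Where you diverge is in how you convert that characterization into submodularity, and the two routes are duals of one another. The paper works from the attack-node side: it defines $e(i)=\{j\in V\mid \{i\}\in B_j\}$ and rewrites $h(S)$ as $\frac{1}{N}\bigl(\left|\cup_{i\in S}e(i)\right| + \text{const}\bigr)$, then appeals to the known fact that coverage objectives (maximum coverage) are submodular. You work from the target-node side: each $g_i$ becomes the indicator $\ind[S\cap R_i\neq\emptyset]$ for a fixed set $R_i$ of singleton attackers, you verify the diminishing-returns inequality for such hit-indicators by hand, and you close with non-negative combinations of submodular functions being submodular. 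These decompositions are transposes of each other ($j\in e(i)\iff i\in R_j$), so nothing essential changes, but the trade-off is real: your version is fully self-contained (no citation of coverage submodularity needed) and handles the degenerate cases cleanly as constant functions, while the paper's version makes the maximum-coverage structure of $h$ explicit, which is what conceptually motivates the greedy correction procedure (GC-RWCS) built on top of this lemma. One small presentational gain in the paper's handling: nodes with $B_j=\emptyset$ are absorbed into a single additive constant rather than split into two constant-function cases, but this is cosmetic.
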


\begin{proof}
We first prove the case when $\forall i\in V, A_i\neq\emptyset$.

First, we show that $\forall i\in V$, if $A_i\neq \emptyset$, for any nonempty $S\subseteq V, g_i(S) = 1$ if and only if $B_i=\emptyset$ or $\exists T\in B_i, T \subseteq S$. On one hand, if $g_i(S) = 1$, then $S\in A_i$. If $\emptyset\in A_i, B_i=\emptyset$. If $\emptyset\notin A_i$, by condition 2) of the basic vulnerable set, $\exists T\in B_i, T \subseteq S$. On the other hand, if $\exists T\in B_i, T \subseteq S$, $g_i(T)=1$, by Assumption~\ref{assum:monotonic}, $g_i(S) \ge g_i(T)$, so $g_i(S)=1$. If $B_i=\emptyset$, as $A_i\neq \emptyset$, if $\emptyset\notin A_i$, the condition 2) of Definition~\ref{def:bvs} will be violated. Therefore $\emptyset\in A_i$ so $g_i(\emptyset) = 1$. Still by Assumption~\ref{assum:monotonic}, $g_i(S) \ge g_i(\emptyset)$, so $g_i(S)=1$. 

Define a function $e: V \rightarrow 2^V$ s.t. for any node $i\in V$,
\[e(i)= \{j\in V \mid \{i\} \in B_j\}.\]
Given $B_i$ is either empty or only contains singleton sets for any $i\in V$, for any nonempty $S\subseteq V$
\begin{align}
    h(S) &= \frac{1}{N} \sum_{i=1}^N g_i(S) \label{eq:n-nodes} \\
    &= \frac{1}{N} \left|\{j\in V\mid B_j = \emptyset \text{ or } \exists T\in B_j, T \subseteq S \} \right| \nonumber \\
    &= \frac{1}{N} \left|\{j\in V\mid B_j = \emptyset \text{ or }  \exists \{i\} \in B_j, i\in S\} \right| \nonumber \\
    &= \frac{1}{N} \left|\{j\in V\mid B_j = \emptyset \text{ or }  \exists i\in S, \{i\} \in B_j\} \right| \nonumber \\
    &= \frac{1}{N} \sbra \left|\cup_{i\in S} e(i) \right| + \left|\{j\in V\mid B_j = \emptyset\}\right| \sket. \nonumber 
\end{align}
$\left|\{j\in V\mid B_j = \emptyset\}\right|$ is a constant independent of $S$. Therefore, maximizing $h(S)$ over $S$ with $|S| \leq r$ is equivalent to maximizing $\left|\cup_{i\in S} e(i) \right|$ over $S$ with $|S|\leq r$, which is a maximum coverage problem. Therefore $h$ is submodular. 

The case of allowing some nodes to have empty vulnerable sets can be easily proved by removing such nodes in Eq.~(\ref{eq:n-nodes}) as their corresponding vulnerable functions always equal to zero.
\end{proof}

\vpara{Proof of Proposition~\ref{prop-approx-submodular}.}
For simplicity, we assume $A_i\neq \emptyset$ for any $i\in V$. The proof below can be easily adapted to the general case without this assumption, by removing the nodes with empty vulnerable sets similarly as the proof for Lemma~\ref{lemma:submodular}.
\begin{proof}
$\forall i\in V$, define $\widetilde{B}_i \triangleq \{S\in B_i\mid |S| = 1\}$. We can then define a new group of vulnerable sets $\widetilde{A}_i$ on $V$ for $i\in V$. Let
\begin{align*}
  \widetilde{A}_i=   
  \left\{
        \begin{array}{lr}
         2^V, &  \text{ if $B_i=\emptyset$},\\
         \emptyset, &\text{$B_i\neq\emptyset$ but $\widetilde{B}_i=\emptyset$},\\
         \{S\subseteq V\mid \exists T\in \widetilde{B}_i, T \subseteq S\},& \text{otherwise.}
         \end{array}
  \right.
\end{align*}
Then it is clear that $\widetilde{B}_i$ is a valid basic vulnerable set corresponding to $\widetilde{A}_i$, for $i\in V$. If we define $\tilde{g}_i: 2^V\rightarrow \{0, 1\}$ as 
\begin{align*}
  \tilde{g}_i(S)=   
  \left\{
        \begin{array}{lr}
         1, &  \text{ if $B_i=\emptyset$ or $\exists T\in \widetilde{B}_i, T \subseteq S$},\\
         0,& \text{otherwise,}
         \end{array}
  \right.
\end{align*}
we can easily verify that $\tilde{g}_i$ is a valid vulnerable function corresponding to $\widetilde{A}_i$, for $i\in V$.
Further let $\tilde{h}:2^V\rightarrow \reals_{+}$ as \[\tilde{h}(S) = \frac{1}{N} \sum_{i=1}^N \tilde{g}_i(S).\] By Lemma~\ref{lemma:submodular}, as $\forall i\in V, \widetilde{B}_i$ is either empty or only contains singleton sets, we know $\tilde{h}$ is submodular. 

Next we investigate the difference between $h$ and $\tilde{h}$. First, for any $S\subseteq V$, if $S\notin \cup_{i=1}^N A_i$, clearly $h(S) = \tilde{h}(S) = 0$; if $|S| \le 1$, it's easy to show $h(S) = \tilde{h}(S)$. Second, for any $S\in \cup_{i=1}^N A_i$ and $|S| > 1$, by Assumption~\ref{assum:homophily}, there are exactly $b$ (omitting the $S$ in $b(S)$) nodes whose vulnerable set contains $S$. Without loss of generality, let us assume the indexes of $b$ nodes are $1, 2, \ldots, b$. Then, for any node $i>b$, $g_i(S)=0, \tilde{g}_i(S)=0$. For node $i=1,2,\ldots, b$, $g_i(S) = 1$, and 
\begin{align*}
  \tilde{g}_i(S)=   
  \left\{
        \begin{array}{lr}
         1, &  \text{ if $B_i=\emptyset$ or $\exists T \subseteq S$, $|T| = 1$ and $T\in \widetilde{B_i}$},\\
         0,& \text{otherwise.}
         \end{array}
  \right.
\end{align*}
By Assumption~\ref{assum:large_perturbation}, there are at least $\lceil pb \rceil$ (omitting the $S$ in $p(S)$) nodes like $j$ s.t. $\tilde{g}_j(S) = 1$. Therefore, $h(S) = \frac{b}{N}$ and $\frac{\lceil pb \rceil}{N} \le \tilde{h}(S) \le \frac{b}{N}$. Hence $1-\frac{1}{r} < 1 \le \frac{h(S)}{\tilde{h}(S)} \le \frac{1}{p} < 1+\frac{1}{r}$.
\end{proof}

\section{Addtional Experiment Details}
\subsection{Additional Dataset Details}
\label{sup:exp}
\vpara{Datasets.} 
We adopt the Deep Graph Library~\citep{wang2019deep} version of Cora, Citeseer, and Pubmed in our experiments. The summary statistics of the datasets are summarized in Table~\ref{stat-data}. The number of edges does not include self-loops.

\begin{table}[htbp]
  \caption{Summary statistics of datasets.}
  \label{stat-data}
  \centering
    \begin{tabular}{ccccc} 
    \hline Dataset & Nodes & Edges & Classes & Features\\ 
    \hline Citeseer & 3,327 & 4,552 & 6 & 3,703\\ 
    Cora & 2,708 & 5,278 & 7 & 1,433\\ 
    Pubmed & 19,717 & 44,324 & 3 & 500 \\ 
    \hline 
\end{tabular}
\end{table}

\subsection{Additional Experiment Results}
\label{sup:results}

In this section, we provide results of more experiment setups. 

\vpara{Comparison to the Random baseline.} We first highlight the relative decrease of accuracy between the proposed GC-RWCS strategy ($L=4$) and the Random strategy in Table~\ref{tab:ratio}. GC-RWCS is able to decrease the node classification accuracy by up to $33.5\%$, and achieves a $70\%$ larger decrease of the accuracy than the Random baseline in most cases. As the GC-RWCS and Random use exactly the same feature perturbation and the node selection step of Random does not include any information of the graph structure, this relative comparison can be roughly viewed as an indicator of the attack effectiveness attributed to the structural inductive biases of the GNN models.

\begin{table}[htbp]
  \centering
  \caption{Accuracy decrease (in $\%$) comparison with clean dataset}
  \scalebox{0.6}{
    \begin{tabular}{l|lll|lll|lll}
          & \multicolumn{3}{c}{Cora} & \multicolumn{3}{|c}{Citeseer } & \multicolumn{3}{|c}{Pubmed} \\
    Method & GCN   & JKNetConcat & JKNetMaxpool & GCN   & JKNetConcat & JKNetMaxpool & GCN   & JKNetConcat & JKNetMaxpool \\
       \multicolumn{10}{c}{Threshold 10\%} \\
    Random & 4.3   & 17.4  & 17    & 3.8   & 12.1  & 11.5  & 3.7   & 9.9   & 10.3 \\
    GC-RWCS & 7.1   & 33.5  & 32.5  & 10.0    & 26.3  & 25.0    & 8.4   & 23.7  & 25.1 \\
    \hline
    GC-RWCS/Random & 165.12\% & 192.53\% & 191.18\% & 263.16\% & 217.36\% & 217.39\% & 227.03\% & 239.39\% & 243.69\% \\
    \multicolumn{10}{c}{Threshold 30\%} \\
    Random & 3.0     & 15.5  & 14    & 2.5   & 10.2  & 9.3   & 3.1   & 8.5   & 8.3 \\
    GC-RWCS & 4.9   & 27.1  & 24.7  & 7.3   & 23.9  & 22.5  & 5.4   & 16.6  & 15.7 \\
    \hline
    GC-RWCS/Random & 163.33\% & 174.84\% & 176.43\% & 292.00\% & 234.31\% & 241.94\% & 174.19\% & 195.29\% & 189.16\% \\
    \end{tabular}%
    }
  \label{tab:ratio}%
\end{table}%

\vpara{More thresholds and sensitivity analysis with respect to $L$..} We provide a setup of $20\%$ threshold in addition to the $10\%$ and $30\%$ thresholds shown in Section~\ref{sec:results}, to give a better resolution of the results. And the results of threshold $20\%$ are consistent with other setups. We also conduct a sensitivity analysis of the hyper-parameter $L$ in GC-RWCS in Table~\ref{tab:sensitivity}, and show the results of GC-RWCS with $L=3,4,5,6,7$. Note that GCN has 2 layers and the JK-Nets have 7 layers. The variations of GC-RWCS results with the provided range of $L$ are typically within $2\%$, indicating that the proposed GC-RWCS strategy does not rely on the exact knowledge of number of layers in the GNN models to be effective.

\begin{table}[tbp]
  \centering
  \caption{Summary of the accuracy (in $\%$) when $L=\{3,4,5,6,7\}$. The \textbf{bold number} and the asterisk (*) denotes the same meaning as Table~\ref{tab:result}. The \underline{underline} marker denotes the values of GC-RWCS outperforms all the baseline.}
    \scalebox{0.6}{
    \begin{tabular}{l|lll|lll|lll}
          & \multicolumn{3}{c}{Cora} & \multicolumn{3}{|c}{Citeseer } & \multicolumn{3}{|c}{Pubmed} \\
    Method & GCN   & JKNetConcat & JKNetMaxpool & GCN   & JKNetConcat & JKNetMaxpool & GCN   & JKNetConcat & JKNetMaxpool \\
    None  & 85.6 $\pm$ 0.3 & 86.2 $\pm$ 0.2 & 85.8 $\pm$ 0.3 & 75.1 $\pm$ 0.2 & 72.9 $\pm$ 0.3 & 73.2 $\pm$ 0.3 & 85.7 $\pm$ 0.1 & 85.8 $\pm$ 0.1 & 85.7 $\pm$ 0.1 \\
    \multicolumn{10}{c}{Threshold 10\%} \\
    Random & 81.3 $\pm$ 0.3 & 68.8 $\pm$ 0.8 & 68.8 $\pm$ 1.3 & 71.3 $\pm$ 0.3 & 60.8 $\pm$ 0.8 & 61.7 $\pm$ 0.9 & 82.0 $\pm$ 0.3 & 75.9 $\pm$ 0.7 & 75.2 $\pm$ 0.7 \\
    Degree & 78.2 $\pm$ 0.4 & 60.7 $\pm$ 1.0 & 59.9 $\pm$ 1.5 & 67.5 $\pm$ 0.4 & 52.5 $\pm$ 0.8 & 53.7 $\pm$ 1.0 & 78.9 $\pm$ 0.5 & 63.4 $\pm$ 1.0 & 63.2 $\pm$ 1.2 \\
    Pagerank & 79.4 $\pm$ 0.4 & 71.6 $\pm$ 0.6 & 70.0 $\pm$ 1.0 & 70.1 $\pm$ 0.3 & 61.5 $\pm$ 0.5 & 62.6 $\pm$ 0.6 & 80.3 $\pm$ 0.3 & 71.3 $\pm$ 0.8 & 71.2 $\pm$ 0.8 \\
    Betweenness & 79.7 $\pm$ 0.4 & 60.5 $\pm$ 0.9 & 60.3 $\pm$ 1.6 & 68.9 $\pm$ 0.3 & 53.5 $\pm$ 0.8 & 55.1 $\pm$ 1.0 & 78.5 $\pm$ 0.6 & 67.1 $\pm$ 1.1 & 66.1 $\pm$ 1.1 \\
    RWCS  & 79.4 $\pm$ 0.4 & 71.7 $\pm$ 0.5 & 70.3 $\pm$ 0.9 & 69.9 $\pm$ 0.3 & 62.4 $\pm$ 0.4 & 63.1 $\pm$ 0.6 & 79.8 $\pm$ 0.3 & 70.7 $\pm$ 0.8 & 70.7 $\pm$ 0.8 \\
    GC-RWCS-3 & 78.6 $\pm$ 0.5 & \textbf{\underline{52.1}} $\pm$ 1.1* & \textbf{\underline{53.0}} $\pm$ 1.9* & \textbf{\underline{64.8}} $\pm$ 0.5* & \underline{\textbf{46.4}} $\pm$ 0.8* & \underline{\textbf{48.2}} $\pm$ 1.0* & \underline{78.1} $\pm$ 0.6 & \underline{62.3} $\pm$ 1.2 & \underline{61.6} $\pm$ 1.5 \\
    GC-RWCS-4 & 78.5 $\pm$ 0.5 & \underline{52.7} $\pm$ 1.0* & \underline{53.3} $\pm$ 1.9* & \underline{65.1} $\pm$ 0.5* & \underline{46.6} $\pm$ 0.8* & \underline{48.2} $\pm$ 1.1* & \textbf{\underline{77.3}} $\pm$ 0.7 & \textbf{\underline{62.1}} $\pm$ 1.2 & \textbf{\underline{60.6}} $\pm$ 1.4* \\
    GC-RWCS-5 & 78.9 $\pm$ 0.5 & \underline{53.5} $\pm$ 1.1* & \underline{54.2} $\pm$ 1.9* & \underline{65.3} $\pm$ 0.5* & \underline{46.6} $\pm$ 0.8* & \underline{48.4} $\pm$ 1.0* & \underline{78.4} $\pm$ 0.5 & 64.2 $\pm$ 1.2 & \underline{62.5} $\pm$ 1.4 \\
    GC-RWCS-6 & 78.5 $\pm$ 0.5 & \underline{54.3} $\pm$ 1.1* & \underline{54.9} $\pm$ 1.9* & \underline{65.5} $\pm$ 0.5* & \underline{47.1} $\pm$ 0.8 & \underline{48.9} $\pm$ 1.1* & \underline{78.0} $\pm$ 0.6 & 63.7 $\pm$ 1.1 & \underline{62.6} $\pm$ 1.4 \\
    GC-RWCS-7 & \textbf{\underline{78.1}} $\pm$ 0.5 & \underline{54.2} $\pm$ 1.1* & \underline{54.8} $\pm$ 1.9* & \underline{66.1} $\pm$ 0.4* & \underline{47.5} $\pm$ 0.8 & \underline{49.3} $\pm$ 1.1* & 78.7 $\pm$ 0.5 & 64.9 $\pm$ 1.2 & 63.3 $\pm$ 1.3 \\
        \multicolumn{10}{c}{Threshold 20\%} \\
    Random & 82.3 $\pm$ 0.3 & 71.7 $\pm$ 1.1 & 69.8 $\pm$ 1.1 & 72.1 $\pm$ 0.3 & 62.1 $\pm$ 0.7 & 62.6 $\pm$ 0.9 & 82.6 $\pm$ 0.2 & 77.9 $\pm$ 0.5 & 77.5 $\pm$ 0.5 \\
    Degree & \textbf{79.3} $\pm$ 0.4 & 64.2 $\pm$ 1.2 & 61.6 $\pm$ 1.3 & 69.2 $\pm$ 0.4 & 56.0 $\pm$ 0.8 & 56.4 $\pm$ 1.0 & 80.6 $\pm$ 0.4 & 69.5 $\pm$ 0.8 & 69.4 $\pm$ 1.0 \\
    Pagerank & 80.8 $\pm$ 0.3 & 74.5 $\pm$ 0.8 & 73.0 $\pm$ 0.8 & 72.1 $\pm$ 0.3 & 68.3 $\pm$ 0.3 & 68.2 $\pm$ 0.4 & 82.2 $\pm$ 0.2 & 77.7 $\pm$ 0.4 & 77.8 $\pm$ 0.4 \\
    Betweenness & 80.7 $\pm$ 0.4 & 62.2 $\pm$ 1.4 & 60.1 $\pm$ 1.4 & 70.1 $\pm$ 0.4 & 54.8 $\pm$ 0.8 & 55.8 $\pm$ 1.1 & 80.2 $\pm$ 0.4 & 72.4 $\pm$ 0.8 & 72.0 $\pm$ 0.7 \\
    RWCS  & 81.4 $\pm$ 0.3 & 76.8 $\pm$ 0.6 & 76.0 $\pm$ 0.6 & 72.4 $\pm$ 0.3 & 68.9 $\pm$ 0.3 & 69.0 $\pm$ 0.4 & 81.3 $\pm$ 0.2 & 76.0 $\pm$ 0.4 & 76.5 $\pm$ 0.4 \\
    GC-RWCS-3 & 79.4 $\pm$ 0.5 & \textbf{\underline{57.5}} $\pm$ 1.6* & \textbf{\underline{53.1}} $\pm$ 1.5* & \textbf{\underline{67.1}} $\pm$ 0.4* & \underline{48.4} $\pm$ 0.9* & \underline{49.3} $\pm$ 1.2* & \underline{\textbf{79.0}} $\pm$ 0.5* & \underline{\textbf{67.4}} $\pm$ 0.9* & \underline{\textbf{66.3}} $\pm$ 1.0* \\
    GC-RWCS-4 & 79.4 $\pm$ 0.5 & \underline{57.5} $\pm$ 1.7* & \underline{53.2} $\pm$ 1.4* & \underline{67.3} $\pm$ 0.5* & \underline{\textbf{47.9}} $\pm$ 0.9* & \textbf{\underline{48.8}} $\pm$ 1.3* & \underline{\textbf{79.0}} $\pm$ 0.5* & \underline{67.4} $\pm$ 1.0* & \underline{\textbf{66.3}} $\pm$ 1.0* \\
    GC-RWCS-5 & 79.4 $\pm$ 0.5 & \underline{59.0} $\pm$ 1.7* & \underline{54.5} $\pm$ 1.4* & \underline{67.3} $\pm$ 0.4* & \underline{48.4} $\pm$ 0.9* & \underline{49.4} $\pm$ 1.3* & \underline{79.2} $\pm$ 0.5* & \underline{68.5} $\pm$ 0.9 & \underline{68.1} $\pm$ 0.9 \\
    GC-RWCS-6 & 79.5 $\pm$ 0.5 & \underline{59.3} $\pm$ 1.7 & \underline{54.9} $\pm$ 1.5* & \underline{68.1} $\pm$ 0.4* & \underline{49.2} $\pm$ 0.9* & \underline{50.2} $\pm$ 1.3* & \underline{79.1} $\pm$ 0.5* & \underline{68.4} $\pm$ 0.9 & \underline{68.5} $\pm$ 1.0 \\
    GC-RWCS-7 & 79.4 $\pm$ 0.5 & \underline{59.3} $\pm$ 1.6 & \underline{55.3} $\pm$ 1.5* & \underline{68.1} $\pm$ 0.4* & \underline{50.0} $\pm$ 0.9* & \underline{50.8} $\pm$ 1.3* & \underline{79.2} $\pm$ 0.5* & \underline{68.7} $\pm$ 0.9 & \underline{68.2} $\pm$ 0.8 \\
    \multicolumn{10}{c}{Threshold 30\%} \\
    Random & 82.6 $\pm$ 0.4 & 70.7 $\pm$ 1.1 & 71.8 $\pm$ 1.1 & 72.6 $\pm$ 0.3 & 62.7 $\pm$ 0.8 & 63.9 $\pm$ 0.8 & 82.6 $\pm$ 0.2 & 77.3 $\pm$ 0.4 & 77.3 $\pm$ 0.5 \\
    Degree & 80.7 $\pm$ 0.4 & 64.9 $\pm$ 1.4 & 67.0 $\pm$ 1.5 & 70.4 $\pm$ 0.4 & 56.9 $\pm$ 0.8 & 58.7 $\pm$ 0.9 & 81.5 $\pm$ 0.4 & 72.4 $\pm$ 0.7 & 72.1 $\pm$ 0.8 \\
    Pagerank & 82.6 $\pm$ 0.3 & 79.6 $\pm$ 0.4 & 79.7 $\pm$ 0.4 & 72.9 $\pm$ 0.2 & 70.2 $\pm$ 0.3 & 70.3 $\pm$ 0.3 & 83.0 $\pm$ 0.2 & 79.3 $\pm$ 0.3 & 79.5 $\pm$ 0.3 \\
    Betweenness & 81.8 $\pm$ 0.4 & 64.1 $\pm$ 1.3 & 65.9 $\pm$ 1.4 & 70.7 $\pm$ 0.3 & 56.3 $\pm$ 0.8 & 58.3 $\pm$ 0.9 & 81.3 $\pm$ 0.3 & 74.1 $\pm$ 0.5 & 74.5 $\pm$ 0.5 \\
    RWCS  & 82.9 $\pm$ 0.3 & 79.7 $\pm$ 0.4 & 80.0 $\pm$ 0.4 & 72.9 $\pm$ 0.2 & 70.2 $\pm$ 0.3 & 70.4 $\pm$ 0.3 & 82.1 $\pm$ 0.2 & 77.8 $\pm$ 0.3 & 78.4 $\pm$ 0.3 \\
    GC-RWCS-3 & \textbf{\underline{80.2}} $\pm$ 0.6 & \underline{\textbf{57.3}} $\pm$ 1.7* & \underline{\textbf{59.0}} $\pm$ 1.6* & \underline{67.9} $\pm$ 0.5* & \underline{49.1} $\pm$ 0.9* & \underline{50.8} $\pm$ 1.1* & \underline{80.3} $\pm$ 0.5* & \underline{\textbf{69.0}} $\pm$ 0.7* & \underline{\textbf{69.8}} $\pm$ 0.7* \\
    GC-RWCS-4 & 80.7 $\pm$ 0.5 & \underline{59.1} $\pm$ 1.6* & \underline{61.1} $\pm$ 1.6* & \underline{\textbf{67.8}} $\pm$ 0.5* & \underline{\textbf{49.0}} $\pm$ 0.9* & \underline{\textbf{50.7}} $\pm$ 1.1* & \underline{80.3} $\pm$ 0.5* & \underline{69.2} $\pm$ 0.7* & \underline{70.0} $\pm$ 0.7* \\
    GC-RWCS-5 & 80.8 $\pm$ 0.5 & \underline{59.8} $\pm$ 1.6* & \underline{61.5} $\pm$ 1.6* & \underline{68.4} $\pm$ 0.5* & \underline{49.2} $\pm$ 0.9* & \underline{51.2} $\pm$ 1.1* & \textbf{\underline{80.2}} $\pm$ 0.5* & \underline{70.4} $\pm$ 0.6* & \underline{71.5} $\pm$ 0.6 \\
    GC-RWCS-6 & 80.7 $\pm$ 0.5 & \underline{59.8} $\pm$ 1.5* & \underline{61.4} $\pm$ 1.5* & \underline{68.5} $\pm$ 0.5* & \underline{50.5} $\pm$ 0.9* & \underline{52.2} $\pm$ 1.1* & \textbf{\underline{80.2}} $\pm$ 0.5* & \underline{70.5} $\pm$ 0.5* & \underline{71.6} $\pm$ 0.6 \\
    GC-RWCS-7 & 80.7 $\pm$ 0.5 & \underline{60.2} $\pm$ 1.5* & \underline{61.9} $\pm$ 1.5* & \underline{68.7} $\pm$ 0.5* & \underline{50.7} $\pm$ 0.9* & \underline{52.6} $\pm$ 1.1* & \underline{80.3} $\pm$ 0.4* & \underline{70.9} $\pm$ 0.5* & \underline{71.9} $\pm$ 0.6 \\
    \end{tabular}%
    }
  \label{tab:sensitivity}%
\end{table}%

\vpara{Extra experiments on Graph Attetion Networks (GAT).} We further show that the proposed attack strategy GC-RWCS, while being derived from the GCN model, is also able to generalize well on GAT~\citep{velivckovic2017graph}. The results on GAT are shown in the Table~\ref{gat}.
\begin{table}[htbp]
\caption{Accuracy (in \%) on GAT model}
  \scalebox{0.7}{
    \begin{tabular}{l|lll|lll|lll}
    Dataset & \multicolumn{3}{c|}{Cora} & \multicolumn{3}{c}{Citeseer} & \multicolumn{3}{|c}{Pubmed} \\

    threshold      &  10\%   & 20\%   & 30\%   &  10\%   & 20\%   & 30\%   &  10\%   & 20\%   & 30\% \\
    \hline
    None & \multicolumn{3}{c|}{87.8 $\pm$ 0.2} & \multicolumn{3}{c}{76.9 $\pm$ 0.3} & \multicolumn{3}{|c}{85.2 $\pm$ 0.1} \\
    Random & 72.9 $\pm$ 0.5 & 73.8 $\pm$ 0.6 & 73.9 $\pm$ 0.6 & 70.0 $\pm$ 0.5 & 71.2 $\pm$ 0.4 & 71.7 $\pm$ 0.4 & 73.9 $\pm$ 0.4 & 75.4 $\pm$ 0.3 & 76.2 $\pm$ 0.3 \\
    Degree & 66.5 $\pm$ 0.7 & 67.3 $\pm$ 0.7 & 69.8 $\pm$ 0.7 & 63.3 $\pm$ 0.5 & 65.9 $\pm$ 0.4 & 67.9 $\pm$ 0.3 & 66.7 $\pm$ 0.7 & 69.0 $\pm$ 0.5 & 71.2 $\pm$ 0.4 \\
    Pagerank & 74.3 $\pm$ 0.5 & 74.8 $\pm$ 0.3 & 82.4 $\pm$ 0.2 & 69.5 $\pm$ 0.3 & 72.9 $\pm$ 0.3 & 74.2 $\pm$ 0.3 & 71.6 $\pm$ 0.4 & 78.1 $\pm$ 0.2 & 79.1 $\pm$ 0.2 \\
    Betweenness & 64.8 $\pm$ 0.5 & 66.0 $\pm$ 0.5 & 67.3 $\pm$ 0.6 & 65.2 $\pm$ 0.5 & 66.5 $\pm$ 0.4 & 67.6 $\pm$ 0.3 & 63.4 $\pm$ 0.7 & 68.4 $\pm$ 0.6 & 72.0 $\pm$ 0.4 \\
    RWCS  & 71.1 $\pm$ 0.5 & 74.6 $\pm$ 0.3 & 82.5 $\pm$ 0.2 & 69.2 $\pm$ 0.3 & 72.9 $\pm$ 0.3 & 73.9 $\pm$ 0.3 & 69.4 $\pm$ 0.5 & 74.9 $\pm$ 0.3 & 77.9 $\pm$ 0.2 \\
    GC-RWCS & \textbf{58.1} $\pm$ 0.6* & \textbf{57.9} $\pm$ 0.6* & \textbf{63.0} $\pm$ 0.5* & \textbf{58.3} $\pm$ 0.6* & \textbf{61.9} $\pm$ 0.6* & \textbf{61.9} $\pm$ 0.4* & \textbf{58.9} $\pm$ 0.9* & \textbf{63.8} $\pm$ 0.7* & \textbf{68.9} $\pm$ 0.5* \\
    \end{tabular}%
    }
     \label{gat}%
\end{table}%

\vpara{Extra experiments on a synthetic dataset.} We also run experiments on a synthetic dataset to show that, when sufficient domain knowledge regarding the node features is present, the proposed attack strategy can be made effective in a pure black-box fashion. We generate the synthetic dataset as follows. First, we generate a graph using the Barab\'asi-Albert random graph model~\citep{barabasi1999emergence} with $N$ nodes, and denote the adjacency matrix as $A$.
Then we sample $D$-dimensional node features, $X\in \reals^{N\times D}$, from a multivariate normal distribution and take the absolute value elementwisely. 
Finally, we generate the node labels as follows. We first calculate $\tilde{Y} = \text{sigmoid}((A+I)XW)$, for some $W\in \reals^D$. Considering a binary classification problem, we make the label $Y_i = 1$ if $\tilde{Y}_i>0.5$, $0$ otherwise. When attacking a GNN model trained on such a synthetic dataset, we assume the attackers know a couple of ``important features'' with large weights in $W$ but do not know any of the trained model information. We randomly generate two datasets with $N=3000$ and $D=10$. And the experiment results are shown in the Table~\ref{synthetic}. And we find the proposed GC-RWCS model performs well on these two synthetic datasets.

\begin{table}[htbp]
  \centering
  \caption{Accuracy (in \%) on GCN model with synthetic data}
    \begin{tabular}{l|lll|lll}
    Dataset & \multicolumn{3}{c}{synthetic\_0} & \multicolumn{3}{|c}{synthetic\_1} \\
    Threshold & 10\%  & 20\%  & 30\%  & 10\%  & 20\%  & 30\% \\\hline
    None  & \multicolumn{3}{c}{83.6$\pm$0.3} & \multicolumn{3}{|c}{85.2$\pm$0.4} \\
    Degree & 77.9$\pm$0.4 & \textbf{77.5}$\pm$0.4 & 79.2$\pm$0.4 & \textbf{77.4}$\pm$0.4 & 79.7$\pm$0.4 & 79.9$\pm$0.2 \\
    Pagerank & 76.7$\pm$0.4 & 78.3$\pm$0.3 & 79.2$\pm$0.4 & 78.5$\pm$0.5 & 79.2$\pm$0.3 & 80.1$\pm$0.3 \\
    Between & 76.3$\pm$0.4 & 80.4$\pm$0.3 & 79.1$\pm$0.4 & 78.7$\pm$0.3 & 80.6$\pm$0.3 & 80.3$\pm$0.3 \\
    Random & 79.3$\pm$0.4 & 80.3$\pm$0.4 & 81.2$\pm$0.4 & 82.4$\pm$0.4 & 79.6$\pm$0.3 & 82.2$\pm$0.2 \\
    RWCS  & 74.9$\pm$0.5 & 78.8$\pm$0.4 & 78.5$\pm$0.3 & 79.0$\pm$0.4 & 79.9$\pm$0.4 & 80.0$\pm$0.3 \\
    GC-RWCS & \textbf{74.0}$\pm$0.3* & 77.9$\pm$0.5 & \textbf{77.4}$\pm$0.4* & 77.5$\pm$0.4 & \textbf{78.9}$\pm$0.3* & \textbf{78.4}$\pm$0.3* \\
    \end{tabular}%
  \label{synthetic}%
\end{table}%

\vpara{Diminishing-return effect with respect to $J$.} In Section~\ref{sec:results}, we have empirically shown the diminishing-return effect on the mis-classification rate when strengthening the adversarial perturbation by increasing the perturbation strength $\lambda$. Here we further validate our observation by showing that a similar effect appears as we strengthen the adversarial perturbation by increasing the number of features to be perturbed, $J$. The results are shown in Figure~\ref{J}. 

\begin{figure}
\vskip -10pt
    \centering
    \begin{subfigure}{0.4\textwidth}
        \centering
        \includegraphics[width=1\linewidth]{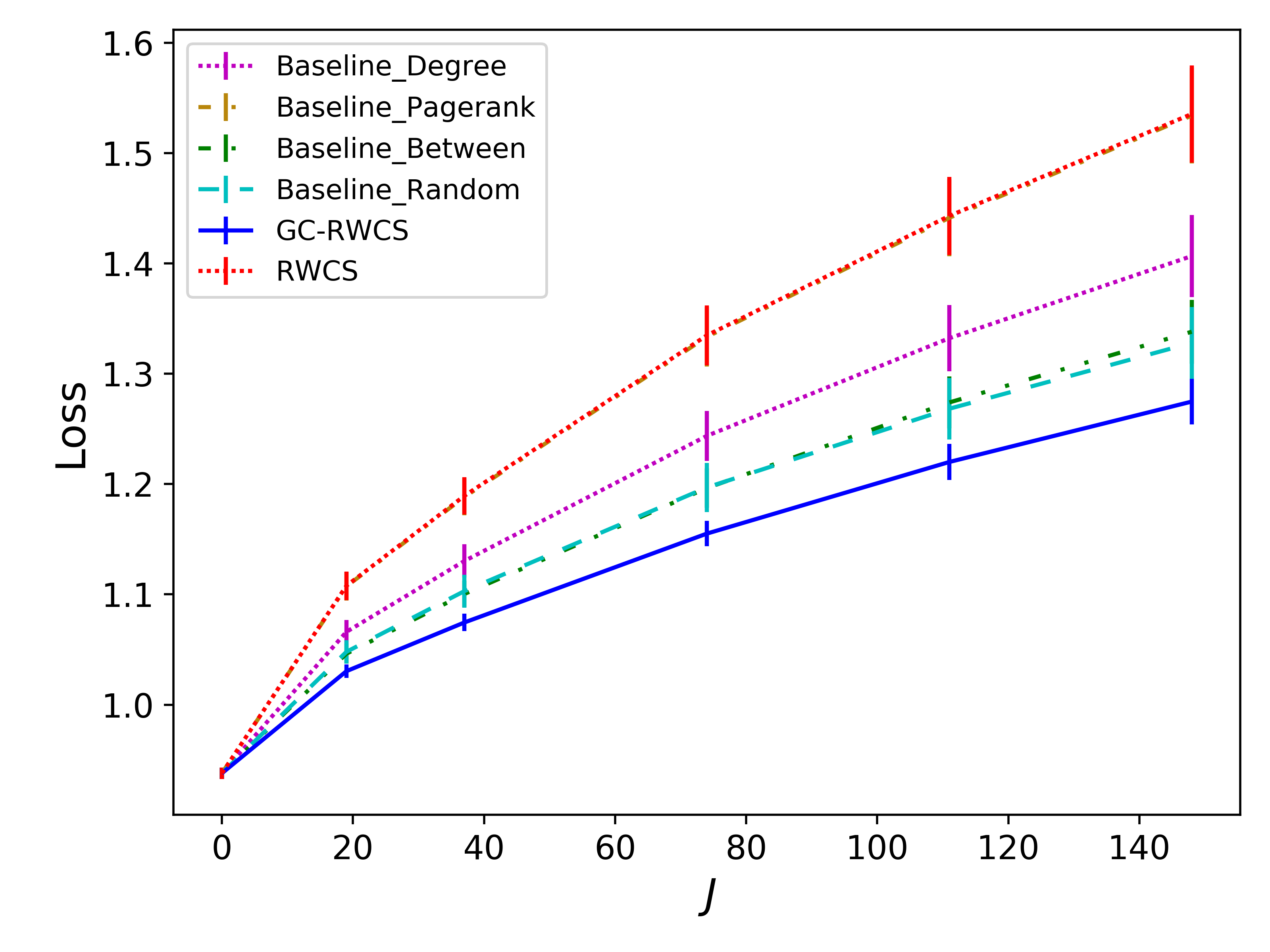}
        \caption{Loss on Test Set}
    \end{subfigure}
    \begin{subfigure}{0.4\textwidth}
        \centering
        \includegraphics[width=1\linewidth]{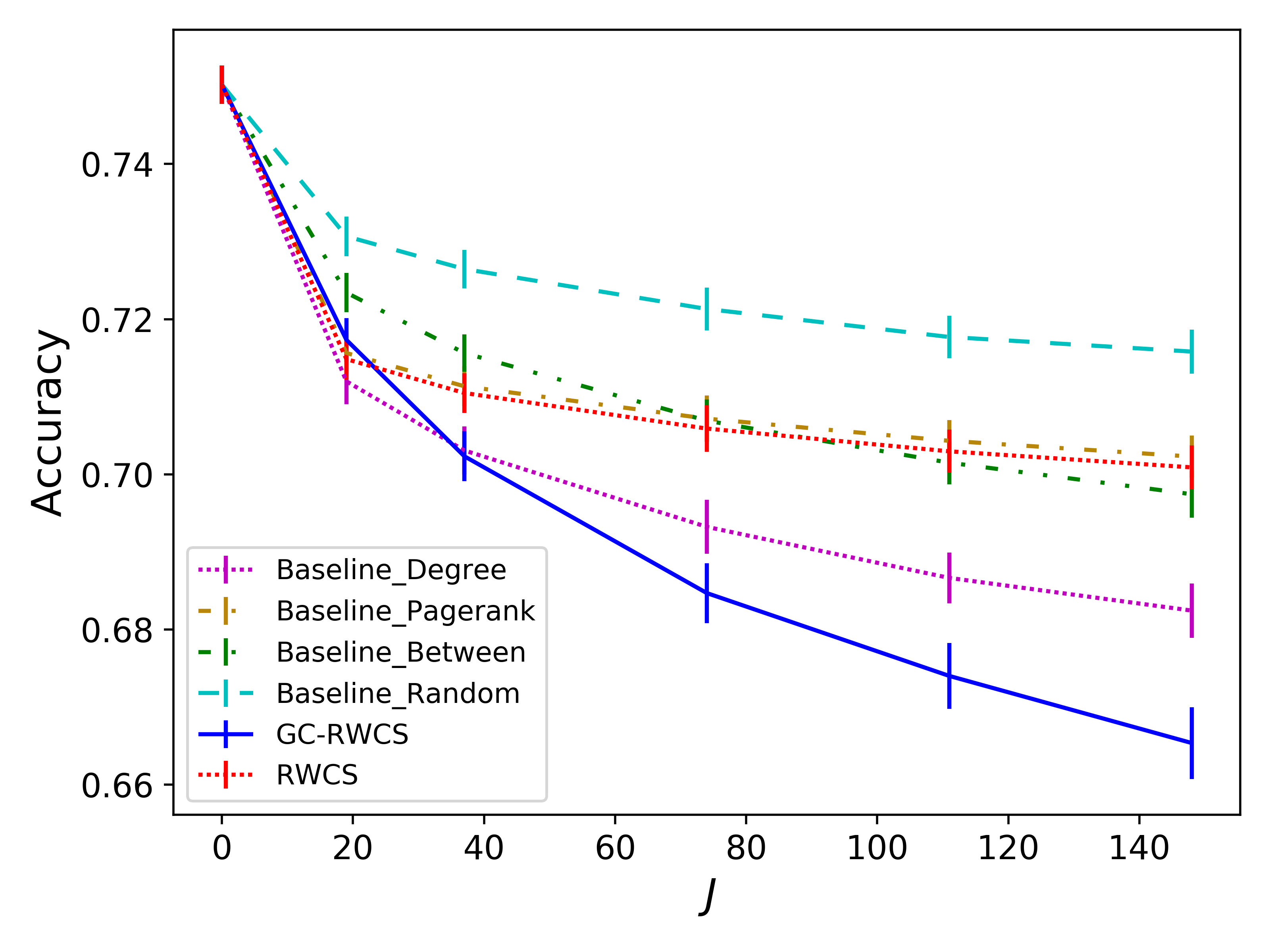}
        \caption{Accuracy on Test Set}
    \end{subfigure}
    \caption{Experiments of attacking GCN on Citeseer with increasing number of features to be perturbed, $J$. Results are averaged over 40 random trials and error bars indicate standard error of mean.}
    \label{J}
\vskip -10pt
\end{figure}

\end{document}